\documentclass{amsart}

\usepackage[utf8]{inputenc}   
\usepackage[margin=1in]{geometry} 
\usepackage{amsmath, amssymb} 
\usepackage{amsthm}           
\usepackage{graphicx}         
\usepackage{subcaption}       
\usepackage{booktabs}         
\usepackage{multirow}         
\usepackage{array}            
\usepackage{longtable}        
\usepackage{cite}             
\usepackage{color}            

\newtheorem{theorem}{Theorem} 
\newtheorem{lemma}[theorem]{Lemma}      
\newtheorem{proposition}[theorem]{Proposition} 
\newtheorem{corollary}[theorem]{Corollary}     
\newtheorem{definition}[theorem]{Definition}   
\newtheorem{Remark}{Remark}           

\numberwithin{equation}{section} 
\allowdisplaybreaks              

\begin{document}

\title[Weighted $L^2$ PINN Loss for solving the BGK model]{A Theory-guided Weighted $L^2$ Loss for solving the BGK model via Physics-informed neural networks}

\author{Gyounghun Ko}
\address{Academy of Mathematics and Systems Sciences, Chinese Academy of Sciences, Beijing, 100190, China}
\thanks{G. Ko and S.-J. Son contributed equally to this work.}

\author{Sung-Jun Son}
\address{Center for Mathematical Machine Learning and its Applications (CM2LA), Department of Mathematics, Pohang University of Science and Technology}

\author{Seung Yeon Cho}
\address{Department of Mathematics, Gyeongsang National University}

\author{Myeong-Su Lee}
\address{Research Institute of Mathematics, Seoul National University}
\thanks{* Corresponding Author: Myeong-Su Lee, E-mail: msl3573@snu.ac.kr}

\begin{abstract}
While Physics-Informed Neural Networks offer a promising framework for solving partial differential equations, the standard $L^2$ loss formulation is fundamentally insufficient when applied to the Bhatnagar-Gross-Krook (BGK) model. Specifically, simply minimizing the standard loss does not guarantee accurate predictions of the macroscopic moments, causing the approximate solutions to fail in capturing the true physical solution. To overcome this limitation, we introduce a velocity-weighted $L^2$ loss function designed to effectively penalize errors in the high-velocity regions. By establishing a stability estimate for the proposed approach, we shows that minimizing the proposed weighted loss guarantees the convergence of the approximate solution.  Also, numerical experiments demonstrate that employing this weighted PINN loss leads to superior accuracy and robustness across various benchmarks compared to the standard approach.
\end{abstract}
\maketitle
\section{Introduction}\label{sec:intro}
Kinetic equations describe non-equilibrium transport phenomena through the time evolution of a velocity distribution function in a phase space involving time, physical space, and velocity space. Such equations are essential in regimes where continuum fluid models become inaccurate, and they play a key role in numerous engineering applications, including hypersonic and high-altitude aerodynamics, vacuum technology, micro/nano-scale gas flows, and related multiscale transport problems \cite{cercignani1988boltzmann,sone2007molecular,bird1994molecular,dimarco2014numerical}. However, traditional grid-based solvers often incur prohibitive computational costs due to the high dimensionality of the phase space and the need to evaluate velocity integrals for macroscopic moments.

Recently, Physics-Informed Neural Networks (PINN) have emerged as a promising deep learning framework for solving diverse partial differential equations \cite{raissi2019physics,karniadakis2021physics}. A key idea of PINN is to reformulate the PDE problem, together with initial and boundary conditions, into an optimization problem over neural network parameters. In the standard PINN setup, the optimization objective---commonly referred to as the PINN loss $\mathcal{L}_{\mathrm{PINN}}$---is typically constructed as a linear combination of squared $L^2$ norms of the residuals. By evaluating these residual terms at randomly sampled points and optimizing the network via stochastic gradient methods, PINNs offer a practical pathway to bypass the curse of dimensionality. Motivated by this advantage, PINNs have been actively explored for high-dimensional kinetic equations \cite{hwang2020trend, lee2023oppinn, li2021physics, kim2026physics, zhang2023physics,jin2024asymptotic}. Furthermore, recent methodological advancements, such as the application of separable architectures \cite{cho2023separable,oh2025separable}, have efficiently addressed the  difficulty of evaluating macroscopic moments via numerical integration over the velocity space. However, despite these empirical successes across various benchmark problems in kinetic theory, the theoretical understanding regarding the reliability of the resulting approximate solutions remains highly limited.

This issue of reliability becomes particularly critical when considering the practical limitations of the optimization process. Ideally, if the optimization objective $\mathcal{L}_{\mathrm{PINN}}$ could be driven exactly to zero, the approximate solution would perfectly coincide with the true solution. In practice, however, it is generally difficult to drive $\mathcal{L}_{\mathrm{PINN}}$ exactly to zero due to several fundamental limitations: the finite expressive capacity of neural networks, numerical errors originating from quadrature or sampling, and the inherent difficulties of non-convex optimization. Consequently, the quality of the learned solution is typically understood by whether the training loss is sufficiently small. As discussed in \cite{wang20222,de2024numerical}, this naturally raises a fundamental question:
\begin{center}
\textbf{``Does a small standard $L^2$ PINN loss $\mathcal{L}_{\mathrm{PINN}}$ guarantee high solution accuracy?''}
\end{center}
The answer to this question is intimately connected to the mathematical stability of the target PDE, meaning it is highly problem-specific. For example, in linear elliptic or parabolic PDEs, classical stability theory ensures that a small residual (in an appropriate norm) directly translates to a small error in the solution \cite{de2024numerical,evans2022partial,gilbarg1998elliptic}. In contrast, the authors in \cite{wang20222} have revealed that for high-dimensional Hamilton-Jacobi-Bellman (HJB) equations, the standard $L^2$ loss fails to guarantee accuracy, suggesting an alternative training method. These contrasting examples imply that the adequacy of a given PINN loss formulation is not universally guaranteed. Therefore, its validity must be carefully examined for specific classes of equations. Related theoretical results for PINN applied to the Boltzmann equation have also been established under near-equilibrium regimes \cite{abdo2024error}. For general non-equilibrium setting, however, mathematical understanding for the standard PINN loss remain largely unexplored.

In this paper, we study the above question for the Bhatnagar--Gross--Krook (BGK) model \cite{bhatnagar1954model,yun2010cauchy}, a relaxation model of the Boltzmann equation. The BGK model replaces the Boltzmann collision operator by a relaxation toward a local Maxwellian, which is determined by macroscopic moments up to second order (mass, momentum, and energy) of the distribution function. These moments are velocity integrals weighted by $1$, $v$, and $|v|^2$. Due to this nonlocal moment coupling, even a small error in the high-velocity tail can significantly bias the macroscopic moments, and consequently the approximation may relax toward an incorrect local equilibrium state. This indicates that, for the BGK model, a small standard $L^2$-based loss may not be sufficient to ensure the accuracy of physically relevant quantities. Indeed, in Section \ref{sec:counterexamples}, we make this issue concrete by constructing explicit families of approximate solutions $\{\tilde f_\varepsilon\}_{\varepsilon>0}$ such that $\mathcal{L}_{\mathrm{PINN}}(\tilde f_\varepsilon)=\mathcal{O}(\varepsilon^2)$, while $\tilde f_\varepsilon$ does not converge to the true solution $f$ as $\varepsilon\to 0$. This shows that the answer to the above question can be negative for the BGK model, and that the standard $L^2$ PINN loss may not be a reliable proxy for solution accuracy.

To resolve this issue, in Section \ref{sec:stability}, we introduce a weighted PINN loss $\mathcal{L}_{w\text{-PINN}}$ motivated by the vulnerabilities identified in the counterexamples. By weighting the standard $L^2$ residual terms with a velocity-dependent function $w(v)$, we can effectively penalize errors in the high-velocity region and sufficiently control the macroscopic moments. While this naturally rules out the specific spurious solutions presented in Section \ref{sec:counterexamples}, we go a step further to ensure its overall reliability. To this end, we rigorously establish a weighted stability estimate for the BGK model within a suitable ansatz space. Under some integrability conditions on $w$, our stability analysis yields a rigorous convergence guarantee: if $\mathcal{L}_{w\text{-PINN}}(\tilde f)\to 0$, then
\begin{align*}
\|(f-\tilde f)(t)\|_{2} \to 0 \qquad \text{for all } t\in[0,\mathcal{T}],
\end{align*}
where $\mathcal{T}$ is the terminal time. Moreover, these integrability conditions can be satisfied by simple and practical choices of weights; a representative example is the polynomial weight $w(v)=1+\alpha|v|^\beta$ with $\alpha>0$ and $\beta>7/2$.

The main contributions of this paper are summarized as follows:
\begin{itemize}
    \item We show that the standard $L^2$ PINN loss can be misleading for the BGK model. In particular, we construct explicit families of approximate solutions for which $\mathcal{L}_{\mathrm{PINN}}\to 0$ while the solution error does not vanish.

    \item We propose a weighted PINN loss $\mathcal{L}_{w\text{-PINN}}$ designed to penalize high-velocity errors, which successfully rules out the spurious families of solutions constructed in Section~\ref{sec:counterexamples}. To ensure its fundamental reliability, we then establish a weighted stability theory, proving that a vanishing weighted loss rigorously guarantees the $L^2$ convergence of the approximate solution and the $L^1$ convergence of its macroscopic moments.

    \item Through benchmark tests in Section \ref{sec:numerical_experiments}, we numerically show that PINNs trained with the proposed weighted loss function achieve higher solution accuracy compared to those trained with the standard $L^2$ loss.
\end{itemize}

\section{Preliminary}

\subsection{The BGK Model of the Boltzmann Equation}
In rarefied gas dynamics, the Boltzmann equation plays a pivotal role in describing the non-equilibrium behavior of gas flows. It is widely applied in various engineering and physical scenarios, such as the aerodynamics of space vehicles during atmospheric re-entry, vacuum technology, and micro-electro-mechanical systems (MEMS). Here, the state of gas particles is characterized by a velocity distribution $f(t, x, v)$ in the phase space $\Omega \times \mathbb{R}^3$ at time $t$, where the spatial domain is $\Omega \subset \mathbb{R}^3$ and the microscopic velocity is $v \in \mathbb{R}^3$. The time evolution of $f$ is governed by continuous transport and binary collisions among particles.

Due to the high-dimensional, five-fold integral nature of the Boltzmann collision operator, numerical simulations of the full Boltzmann equation are computationally prohibitive \cite{dimarco2014numerical}. To alleviate this complexity, the Bhatnagar-Gross-Krook (BGK) model \cite{bhatnagar1954model} was introduced, replacing the intricate collision integral with a simpler relaxation operator that drives the system toward a local thermodynamic equilibrium. The BGK model is given by:
\begin{equation*}
    \partial_t f + v \cdot \nabla_x f = \frac{1}{\mathrm{Kn}} \left( \mathcal{M}[f] - f\right).
\end{equation*}
The Knudsen number, denoted as $\mathrm{Kn}$, serves as a dimensionless parameter representing the ratio of the mean free path of a particle to the physical length scale of interest. In this context, we assume a fixed collision frequency. The local Maxwellian distribution, denoted as $\mathcal{M}[f]$, defines the particle density at local thermodynamic equilibrium and is characterized by:
\[
    \mathcal{M}[f](t,x,v) = \mathcal{M}_{\left(\rho(t,x),u(t,x),T(t,x)\right)}(v) = \frac{\rho(t,x)}{\left(2\pi T(t,x)\right)^{3/2}}e^{-\frac{|v - u(t,x)|^2}{2T(t,x)}},
\]
where the macroscopic mass $\rho$, velocity $u$, and temperature $T$ are uniquely determined by the moments of the density function $f$:
\begin{align*}
    &\rho(t,x) = \int_{\mathbb{R}^3} f(t,x, v) dv, \\
    &u(t,x) = \frac{1}{\rho(t,x)}\int_{\mathbb{R}^3} v f(t,x, v) dv, \\
    &T(t,x) = \frac{1}{3\rho(t,x)}\int_{\mathbb{R}^3} |v-u(t,x)|^2 f(t,x, v) dv.
\end{align*}

In this paper, we restrict our attention to the spatial domain $\Omega = [0,1]^3$ subject to periodic boundary conditions. Specifically, we consider the following Cauchy problem for the BGK model:
\begin{align}
\begin{split}\label{eq:bgk}
    \partial_t f + v \cdot \nabla_x f &= \frac{1}{\mathrm{Kn}} \left( \mathcal{M}[f] - f\right), \quad \forall (t,x,v) \in (0,\mathcal{T})\times (0,1)^3 \times \mathbb{R}^3, \\
    f(0, x, v) &= f_0(x,v), \quad \forall (x, v) \in [0,1]^3 \times \mathbb{R}^3, 
\end{split}
\end{align}
with periodic boundary conditions.

Given its physical fidelity and relative simplicity compared to the full Boltzmann equation, the BGK model has been the subject of extensive mathematical and numerical studies. For mathematical analysis of the model, interested readers are referred to \cite{yun2010cauchy, yun2015ellipsoidal,Yun2019,kim2021stationary, cho2025kinetic,saint2002modele,SY2022,SY2023,Son2025,SY2025}. Concurrently, numerous numerical methods have been developed and rigorously analyzed to efficiently simulate the BGK model, including various deterministic discrete-velocity methods, semi-Lagrangian schemes, and stochastic particle methods \cite{boscarino2020high,cho2024conservative,dimarco2014numerical,mieussens2000discrete,pieraccini2007implicit, sands2025adaptive}.

\subsection{Physics-Informed Neural Networks}

In recent years, Physics-Informed Neural Networks (PINN) have garnered significant attention for solving a wide spectrum of forward and inverse problems in physics and engineering \cite{lee2025forward, gie2024semi,kim2024physics,kim2026physics,cai2021physics,jo2026task}. 

To formulate the PINN approach for the BGK model, let $f_\theta(t, x, v)$ be an approximate solution constructed using deep neural networks, parameterized by a set of weights and biases $\theta$. The primary goal of PINN is to find network parameters $\theta$ such that the approximate solution $f_\theta$ accurately satisfies the Cauchy problem \eqref{eq:bgk}.

For this, PINN trains the approximate solution by minimizing the following standard loss function $\mathcal{L}_{PINN}(\theta)$, which consists of the squared $L^2$ residuals of the governing equation, boundary conditions, and initial condition:
\begin{equation}\label{eq:pinn_loss}
    \mathcal{L}_{PINN}(\theta) =\mathcal{L}_{pde}(\theta) + \lambda_{bc}\mathcal{L}_{bc}(\theta) + \lambda_{ini} \mathcal{L}_{ini}(\theta).
\end{equation}
Here, each loss component is formulated using the squared $L^2$ norm:
\begin{equation}\label{eq:L2_components}
\begin{split}
    \mathcal{L}_{pde}(\theta) &:= \int_0^\mathcal{T} \int_{\mathbb{R}^3} \int_{[0,1]^3} |\partial_t f_\theta + v \cdot \nabla_x f_\theta -\frac{1}{\mathrm{Kn}} \left( \mathcal{M}[f_\theta] - f_\theta\right)|^2 dx dv dt, \\
    \mathcal{L}_{bc}(\theta) &:= \sum_{i=1}^3 \int_0^\mathcal{T} \int_{\mathbb{R}^3} \int_{[0,1]^2} \big|f_\theta(t, x|_{x_i=1}, v) - f_\theta(t, x|_{x_i=0}, v)\big|^2 d\hat{x}_i dv dt, \\
    \mathcal{L}_{ini}(\theta) &:= \int_{\mathbb{R}^3} \int_{[0,1]^3} |f_\theta(0, x, v) - f_0(x,v)|^2 dx dv,
\end{split}    
\end{equation}
where $d\hat{x}_i$ denotes the surface integration measure on the boundary face omitting the $i$-th spatial coordinate, and $\lambda_{bc}$, and $\lambda_{ini}$ are positive penalty weights balancing the interplay between the distinct loss terms. In practice, the exact integrals in \eqref{eq:L2_components} are approximated using numerical integration or Monte Carlo sampling over a set of collocation points. One then seeks a set of optimized parameters $\theta^*$ by minimizing $\mathcal{L}_{PINN}(\theta)$ using gradient-based optimization algorithms, such as Adam \cite{kingma2014adam}.

Ideally, if $\mathcal{L}_{PINN}(\theta)$ could be driven exactly to zero, the network would perfectly represent the true solution. However, achieving a strictly zero loss is practically impossible due to several fundamental limitations: the finite expressive capacity of the chosen neural network architecture, the inevitable discrepancy between the continuous exact integrals and their numerical approximations, and the inherent difficulties of non-convex optimization that often trap the solution in local minima. Consequently, the training process inevitably terminates with an approximate solution possessing a small, yet strictly positive, residual loss. This practical reality underscores the need to investigate whether a small standard $L^2$ loss inherently guarantees a high-fidelity approximation—a question we explore in detail in the following section.

\section{Limitations of the Standard $L^2$ PINN Loss}
\label{sec:counterexamples}

In this section, we address the following question posed in Section \ref{sec:intro}:
whether a small standard $L^2$ PINN loss guarantees a small $L^2$ solution error.
We answer this question in the negative by constructing explicit families of approximate solutions $\{\tilde f_\varepsilon\}_{\varepsilon>0}$
for which the PINN loss converges to zero as $\varepsilon\rightarrow0$, while the corresponding $L^2$ error with respect to the exact solution remains bounded away from zero.

Although our main focus is the full inhomogeneous BGK model \eqref{eq:bgk}, it suffices to construct a counterexample
in the spatially homogeneous setting:
\begin{align}\label{eq:homo_bgk}
\begin{split}
\partial_t f(t,v) &= \mathcal{M}[f(t,\cdot)](v) - f(t,v), \qquad (t,v)\in(0,\mathcal{T})\times\mathbb{R}^3,\\
f(0,v) &= f_0(v), \qquad v\in\mathbb{R}^3.
\end{split}
\end{align}
where $\mathcal{T}$ is a terminal time. Indeed, suppose the initial data in the inhomogeneous BGK model is independent of $x$. Let $f(t,v)$ denote the corresponding solution of the spatially homogeneous BGK model with the same initial datum. If we view $f$ as a function on $(t,x,v)$ by setting $f(t,x,v):=f(t,v)$, then $v\cdot\nabla_x f \equiv 0$. Hence $f$ satisfies the inhomogeneous BGK model as well (with $x$-independent initial data).
Therefore, any counterexample constructed in the homogeneous setting immediately yields a counterexample for the full model, which is sufficient to answer the fundamental question in the negative. 

Throughout this section, for simplicity, we assume that $f_0$ satisfies the following normalization:
\begin{align}\label{eq:initial_cond}
\int_{\mathbb{R}^3} f_0(v)\begin{pmatrix}1\\ v\\ |v|^2\end{pmatrix}\,dv
=\begin{pmatrix}1\\ \mathbf{0}\\ 3\end{pmatrix}.
\end{align}
In the spatially homogeneous setting, the macroscopic moments remain constant, which directly implies $\mathcal{M}[f(t,\cdot)]=\mathcal{M}[f_0]$. Consequently, we can obtain the exact solution $f$ with the explicit formula:
$$f(t,v)=e^{-t}f_0(v)+(1-e^{-t})\mathcal{M}_{(1,\mathbf{0},1)}(v)$$
for all $t\in[0,\mathcal{T}]$.

To construct our counterexamples, we first introduce a class of functions, $K_\varepsilon(v)$, parametrized with $\varepsilon>0$, satisfying the following conditions
\begin{enumerate}
    \item $\|K_\varepsilon\|_{L^2(\mathbb{R}^3_v)} = \mathcal{O}(\varepsilon)$.
    \item Its contribution to macroscopic moments (specifically energy) is given as follows:
    \begin{align*}
        \int_{\mathbb{R}^3}\begin{pmatrix}1\\ v\\ |v|^2\end{pmatrix}K_\varepsilon(v)= \begin{pmatrix} \mathcal{O}(\varepsilon) \\ \mathbf{0} \\ 1 + \mathcal{O}(\varepsilon) \end{pmatrix}.
    \end{align*}
\end{enumerate}

In the rest of this section, we show that when this function $K_\varepsilon$ is added as a perturbation to the initial condition or the PDE source term of \eqref{eq:homo_bgk}, the resulting $L^2$ PINN loss remains small ($\mathcal{O}(\varepsilon^2)$) due to the first condition of $K_\varepsilon$. However, the second property causes the approximate solution $\tilde{f}_\varepsilon$ to possess macroscopic moments that differ significantly from those of the exact solution. This inherent macroscopic discrepancy imposes a non-negligible lower bound on the actual $L^2$ distance $\|\tilde{f}_\varepsilon - f\|_{L^2}$, which persists regardless of how small $\varepsilon$ becomes.
\begin{Remark}
    A specific example of such a function $K_\varepsilon(v)$ can be constructed using a superposition of two Maxwellians located in the high-velocity region. Let $\mathbf{u}_\varepsilon = (1/\sqrt{\varepsilon}, 0, 0) \in \mathbb{R}^3$. We define:
        \begin{align*}
        K_\varepsilon(v)=\mathcal{M}_{(\varepsilon/2, \, \mathbf{u}_\varepsilon, \, 1)}(v) + \mathcal{M}_{(\varepsilon/2, \, -\mathbf{u}_\varepsilon, \, 1)}(v).
    \end{align*}
    By elementary computations, we can exactly evaluate its macroscopic moments as follows:
    \begin{align} \label{K}
        \int_{\mathbb{R}^3} K_\varepsilon(v) \begin{pmatrix} 1 \\ v \\ |v|^2 \end{pmatrix}dv 
        = \frac{\varepsilon}{2} \begin{pmatrix} 1 \\ \mathbf{u}_\varepsilon \\ |\mathbf{u}_\varepsilon|^2 + 3 \end{pmatrix} 
        + \frac{\varepsilon}{2} \begin{pmatrix} 1 \\ -\mathbf{u}_\varepsilon \\ |-\mathbf{u}_\varepsilon|^2 + 3 \end{pmatrix}
        = \begin{pmatrix} \varepsilon \\ \mathbf{0} \\ 1 + 3\varepsilon \end{pmatrix}.
    \end{align}
    Furthermore, its exact $L^2$ norm can be computed as:
    \begin{align*}
        \|K_\varepsilon\|_{L^2_v}^2 = \int_{\mathbb{R}^3} \left( \mathcal{M}_{(\varepsilon/2, \, \mathbf{u}_\varepsilon, \, 1)} + \mathcal{M}_{(\varepsilon/2, \, -\mathbf{u}_\varepsilon, \, 1)} \right)^2 dv = \frac{\varepsilon^2}{16 \pi^{3/2}} (1 + e^{-1/\varepsilon}) = \mathcal{O}(\varepsilon^2).
    \end{align*}
    Figure \ref{fig:K_eps_profile} displays the profile of $K_\varepsilon(v)$ for $\varepsilon = 0.01$. While the overall magnitude is extremely small—yielding an $\mathcal{O}(\varepsilon^2)$ squared $L^2$ norm—its strict concentration in the high-velocity region generates a non-vanishing $\mathcal{O}(1)$ macroscopic energy moment.
    \begin{figure}
        \centering
        \includegraphics[width=0.5\linewidth]{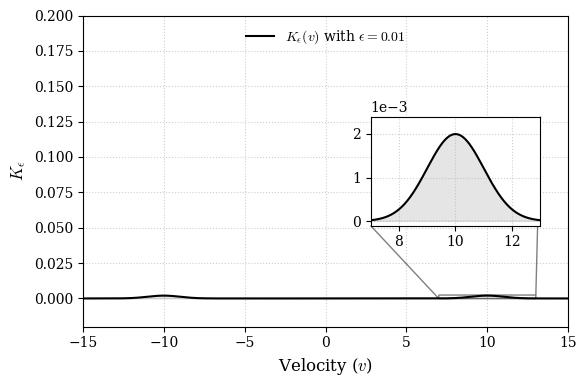}
        \caption{Visualization of the function $K_\varepsilon(v)$ for $\varepsilon = 0.01$.}
        \label{fig:K_eps_profile}
\end{figure}
\end{Remark}

Equipped with $K_\varepsilon(v)$, we now construct two explicit families of counterexamples to demonstrate the inadequacy of the standard $L^2$ loss. The first example considers a scenario with a small initial residual, while the second examines the impact of a small PDE residual.

\subsection{Example 1: Small Initial Residual ($\tilde{f}_\varepsilon^{(1)}$)}

Let us consider the first family of approximate solutions $\{\tilde{f}_\varepsilon^{(1)}\}_{\varepsilon>0}$ satisfying the exact PDE but having a perturbed initial condition:
\begin{align}\label{eq-counter1}
\begin{split}
    \partial_t \tilde{f}_\varepsilon^{(1)} &= \mathcal{M}[\tilde{f}_\varepsilon^{(1)}] - \tilde{f}_\varepsilon^{(1)}, \quad\qquad (t,v) \in (0,\mathcal{T}) \times \mathbb{R}^3, \\ 
    \tilde{f}_\varepsilon^{(1)}(0,v) &= f_0(v)+K_\varepsilon(v), \qquad v \in \mathbb{R}^3. 
    \end{split}
\end{align}
This family of approximate solutions demonstrates that the $L^2$-based standard PINN loss can be made arbitrarily small, but the resulting approximation error does not converge to zero.
\begin{proposition}
     For the family of approximate solutions, $\{\tilde{f}_\varepsilon^{(1)}\}_{\varepsilon>0}$, given by \eqref{eq-counter1}, we have the following:
     \begin{enumerate}
         \item $\mathcal{L}_{\text{PINN}}(\tilde{f}_\varepsilon^{(1)})=\mathcal{O}(\varepsilon^2)$
         \item For the exact solution $f$ to \eqref{eq:homo_bgk}, there exists a positive constant $C>0$ such that the following holds:
         \begin{align*}
             \| (\tilde{f}_\varepsilon^{(1)} - f)(t) \|_{L^2_v} \geq \frac{C}{2}(1-e^{-t}) > 0,
         \end{align*}
         for any arbitrarily given time $t \in (0, \mathcal{T}]$ and sufficiently small $\varepsilon>0$.
     \end{enumerate}
\begin{proof}
The standard PINN loss is purely determined by the initial condition residual:
\begin{align*}
    \mathcal{L}_{\text{PINN}}(\tilde{f}_\varepsilon^{(1)}) = \left\|Res_{\text{pde}}(\tilde{f}_\varepsilon^{(1)})\right\|_{L^2((0,\mathcal{T})\times\mathbb{R}_v^3)}^2 + \lambda_{\text{ini}}\left\|Res_{\text{ini}}(\tilde{f}_\varepsilon^{(1)})\right\|_{L^2_v}^2 = \lambda_{\text{ini}}\left\|K_\varepsilon\right\|_{L^2_v}^2 = \mathcal{O}(\varepsilon^2).
\end{align*}
We note that $\tilde{f}_\varepsilon^{(1)}$ can be obtained explicitly as:
\begin{align*}
    \tilde{f}_\varepsilon^{(1)}(t,v) = e^{-t} (f_0(v) + K_\varepsilon(v)) + \mathcal{M}[\tilde{f}_\varepsilon^{(1)}](v)(1-e^{-t}). 
\end{align*}
Due to the collision invariance, the macroscopic quantities of $\tilde{f}_\varepsilon^{(1)}$ are constant in time. Using \eqref{eq:initial_cond} and \eqref{K}, we compute:
\begin{align*}
    \int_{\mathbb{R}^3} \tilde{f}_\varepsilon^{(1)} \begin{pmatrix} 1 \\ v \\ |v- u_{\tilde{f}_\varepsilon^{(1)}}|^2 \end{pmatrix}dv 
    = \begin{pmatrix} \rho_{\tilde{f}_\varepsilon^{(1)}}  \\ \rho_{\tilde{f}_\varepsilon^{(1)}}u_{\tilde{f}_\varepsilon^{(1)}} \\ 3\rho_{\tilde{f}_\varepsilon^{(1)}} T_{\tilde{f}_\varepsilon^{(1)}} \end{pmatrix}
    = \begin{pmatrix} 1 + \varepsilon    \\ \mathbf{0} \\ 4 + 3\varepsilon \end{pmatrix}.
\end{align*}
Note that the moments are strictly positive and uniformly bounded. To show that the $L^2_v$-norm of the difference between $f$ and $\tilde{f}_\varepsilon^{(1)}$ is not small, we explicitly compute the difference:
\begin{align*}
    (\tilde{f}_\varepsilon^{(1)} - f)(t,v) = e^{-t} K_\varepsilon(v)  + (1-e^{-t}) \big( \mathcal{M}[\tilde{f}_\varepsilon^{(1)}](v) - \mathcal{M}[f](v) \big).
\end{align*}
Using the reverse triangle inequality, we obtain the following estimate:
\begin{align*}
    \| (\tilde{f}_\varepsilon^{(1)}-f)(t)\|_{L^2_v} \geq (1-e^{-t}) \| \mathcal{M}[\tilde{f}_\varepsilon^{(1)}] - \mathcal{M}[f] \|_{L^2_v} - e^{-t} \|K_\varepsilon\|_{L^2_v}.
\end{align*}
To find a lower bound for $\| \mathcal{M}[\tilde{f}_\varepsilon^{(1)}] - \mathcal{M}[f] \|_{L^2_v}$, note that as $\varepsilon \rightarrow 0$, the macroscopic quantities of $\tilde{f}_\varepsilon^{(1)}$ converge to $(\rho, u, T) = (1, \mathbf{0}, 4/3)$, whereas the exact solution has moments $(\rho, u, T) = (1, \mathbf{0}, 1)$. Since the two Maxwellians converge to states with fundamentally different temperatures, their $L^2_v$ difference remains strictly bounded away from zero. Hence, by continuity, there exists a constant $C_0>0$ such that for all sufficiently small $\varepsilon>0$,
\begin{align*}
     \| \mathcal{M}[\tilde{f}_\varepsilon^{(1)}] - \mathcal{M}[f] \|_{L^2_v} \geq C_0 > 0.
\end{align*}
Additionally, we have the bound $\|K_\varepsilon\|_{L^2_v} \le C_1 \varepsilon$ for some absolute constant $C_1 > 0$. Substituting these into the estimate yields:
\begin{align*}
    \| (\tilde{f}_\varepsilon^{(1)}-f)(t)\|_{L^2_v} \geq C_0(1-e^{-t}) - C_1 \varepsilon e^{-t}.
\end{align*}
For any arbitrarily given $t \in (0, \mathcal{T}]$, we can choose $\varepsilon>0$ small enough such that $C_1 \varepsilon e^{-t} \leq \frac{C_0}{2}(1-e^{-t})$. Thus, we obtain the uniform lower bound:
\begin{align*}
    \| (\tilde{f}_\varepsilon^{(1)}-f)(t)\|_{L^2_v} \geq \frac{C_0}{2}(1-e^{-t}) > 0.
\end{align*}
This implies that the $L^2_v$ error does not vanish but grows strictly away from zero for any time $t>0$, completing the proof.   
\end{proof}
\end{proposition}

Figure \ref{fig:loss_vs_error}(a) plots the actual solution error against the standard PINN loss for varying $\varepsilon$. While the standard PINN loss approaches zero, the distance between the exact solution $f$ and $\tilde{f}_\varepsilon^{(1)}$ exhibits a strictly positive lower bound.

\subsection{Example 2: Small PDE Residual ($\tilde{f}_\varepsilon^{(2)}$)}

In this second example, we construct an approximate solution where the initial condition is perfectly satisfied, but a small PDE residual remains. Let us consider the second family of approximate solutions $\tilde{f}_\varepsilon^{(2)}$ which is defined by: 
\begin{align}\label{eq-counter2}
\begin{split}
    \partial_t \tilde{f}_\varepsilon^{(2)} &= \left(\mathcal{M}[\tilde{f}_\varepsilon^{(2)}] - \tilde{f}_\varepsilon^{(2)}\right)+K_\varepsilon(v), \quad\qquad (t,v) \in (0,\mathcal{T}) \times \mathbb{R}^3, \\ 
    \tilde{f}_\varepsilon^{(2)}(0,v) &= f_0(v), \qquad v \in \mathbb{R}^3. 
    \end{split}
\end{align}
Similar to the first example $\tilde{f}_\varepsilon^{(1)}$, this family of approximate solutions demonstrates that perfectly satisfying the initial condition while driving the $L^2$ PDE residual to zero is still fundamentally insufficient to guarantee the convergence of the approximate solution to the true solution.
\begin{proposition}
     For the family of approximate solutions, $\{\tilde{f}_\varepsilon^{(2)}\}_{\varepsilon>0}$, given by \eqref{eq-counter2}, we have the following:
     \begin{enumerate}
         \item $\mathcal{L}_{\text{PINN}}(\tilde{f}_\varepsilon^{(2)})=\mathcal{O}(\varepsilon^2)$
         \item For the exact solution $f$ to \eqref{eq:homo_bgk}, there exists a positive constant $C>0$ such that the following holds:
         \begin{align*}
             \| (\tilde{f}_\varepsilon^{(2)} - f)(t) \|_{L^2_v} \geq C (t - 1 + e^{-t}) > 0.
         \end{align*}
         for any arbitrarily given time $t \in (0, \mathcal{T}]$ and sufficiently small $\varepsilon>0$.
     \end{enumerate}
\begin{proof}
In this case, the standard PINN loss is purely determined by the PDE residual, yielding:
\begin{align*}
    \mathcal{L}_{\text{PINN}}(\tilde{f}_\varepsilon^{(2)}) = \|K_{\varepsilon}\|_{L^2((0,\mathcal{T})\times\mathbb{R}_v^3)}^2 = \mathcal{T}\|K_\varepsilon\|_{L^2_v}^2 = \mathcal{O}(\varepsilon^2).
\end{align*}
Unlike the previous example, the macroscopic quantities of $\tilde{f}_\varepsilon^{(2)}$ are time-dependent. Utilizing \eqref{K}, their rates of change are:
\begin{align*}
    \frac{d}{dt} \int_{\mathbb{R}^3} \tilde{f}_\varepsilon^{(2)} \begin{pmatrix} 1 \\ v \\ |v|^2 \end{pmatrix}dv = \int_{\mathbb{R}^3} K_{\varepsilon}(v) \begin{pmatrix} 1 \\ v \\ |v|^2 \end{pmatrix}dv = \begin{pmatrix} \varepsilon \\ \mathbf{0} \\ 1+3\varepsilon \end{pmatrix}. 
\end{align*}
Integrating over time, the macroscopic quantities of $\tilde{f}_\varepsilon^{(2)}$ explicitly become:
\begin{align*}
    \rho_{\tilde{f}_\varepsilon^{(2)}}(t) = 1 + \varepsilon t, \quad 
    u_{\tilde{f}_\varepsilon^{(2)}}(t) = \mathbf{0}, \quad 
    T_{\tilde{f}_\varepsilon^{(2)}}(t) = \frac{3 + t(1+3\varepsilon)}{3(1+\varepsilon t)}.
\end{align*}
The difference $\tilde{f}_\varepsilon^{(2)} - f$ can be written using Duhamel's principle as:
\begin{align*}
    \tilde{f}_\varepsilon^{(2)} - f = (1-e^{-t})K_{\varepsilon}(v) + \int_0^t \big( \mathcal{M}[\tilde{f}_\varepsilon^{(2)}](s,v)- \mathcal{M}[f](v) \big)e^{-(t-s)}ds.
\end{align*}
To establish a lower bound for the $L^2_v$ error, we apply the projection inequality:
\begin{align*}
    \| (\tilde{f}_\varepsilon^{(2)} - f)(t) \|_{L^2_v} \geq \frac{|\langle (\tilde{f}_\varepsilon^{(2)}-f)(t),\phi\rangle_{L^2_v}|}{\| \phi \|_{L^2_v}}
\end{align*}
for any non-zero test function $\phi(v) \in L^2(\mathbb{R}^3_v)$. Choosing $\phi(v) = e^{-|v|^2}$, we use the exact integration property:
\begin{align*}
    \int_{\mathbb{R}^3} \mathcal{M}_{(\rho, u, T)}(v) e^{-|v|^2} dv = \rho(1+2T)^{-3/2} e^{-\frac{|u|^2}{1+2T}}.
\end{align*}
By applying this identity, the inner product with the perturbation $K_\varepsilon$ can be explicitly evaluated. Since $K_\varepsilon$ consists of two Maxwellians with density $\varepsilon/2$, velocity $\pm u_\varepsilon$ where $|u_\varepsilon|^2 = 1/\varepsilon$, and temperature $T=1$, we obtain exactly:
\begin{align*}
    \langle K_{\varepsilon}, \phi \rangle_{L^2_v} =  3^{-3/2} \varepsilon e^{-\frac{1}{3\varepsilon}}.
\end{align*}
This reveals that the projection of the initial perturbation decays exponentially with respect to $1/\varepsilon$, making it extremely negligible for small $\varepsilon$.

Now, taking the inner product of $\tilde{f}_\varepsilon^{(2)} - f$ with $\phi$ yields:
\begin{align*}
    \langle (\tilde{f}_\varepsilon^{(2)}-f)(t), \phi\rangle_{L^2_v} 
    &= - \int_0^t e^{-(t-s)} I_\varepsilon(s) ds + (1-e^{-t}) 3^{-3/2} \varepsilon e^{-\frac{1}{3\varepsilon}},
\end{align*}
where $I_\varepsilon(s) := 3^{-3/2} - \rho_{\tilde{f}_\varepsilon^{(2)}}(s)(1+2T_{\tilde{f}_\varepsilon^{(2)}}(s))^{-3/2}$. Taking the absolute value and applying the reverse triangle inequality gives:
\begin{align*}
    |\langle (\tilde{f}_\varepsilon^{(2)}-f)(t),\phi\rangle_{L^2_v}| \geq \int_0^t e^{-(t-s)} I_\varepsilon(s) ds - (1-e^{-t}) 3^{-3/2} \varepsilon e^{-\frac{1}{3\varepsilon}}.
\end{align*}
As $\varepsilon \rightarrow 0$, the macroscopic moments converge as $\rho_{\tilde{f}_\varepsilon^{(2)}}(s) \rightarrow 1$ and $T_{\tilde{f}_\varepsilon^{(2)}}(s) \rightarrow 1+s/3$. Thus, $I_\varepsilon(s)$ converges uniformly on $[0, \mathcal{T}]$ to the limit function $I_0(s) := 3^{-3/2} - (3+2s/3)^{-3/2}$. 

By the Mean Value Theorem, since $\frac{d}{ds} (3+2s/3)^{-3/2} = -(3+2s/3)^{-5/2}$, we can bound $I_0(s)$ from below linearly for $s \in [0, \mathcal{T}]$:
\begin{align*}
    I_0(s) \geq \kappa s, \quad \text{where} \quad \kappa := (3+2\mathcal{T}/3)^{-5/2} > 0.
\end{align*}
Due to the uniform convergence of $I_\varepsilon \to I_0$, for sufficiently small $\varepsilon>0$, we have $I_\varepsilon(s) \geq \frac{\kappa}{2} s$. Substituting this into the time integral yields:
\begin{align*}
    \int_0^t e^{-(t-s)} I_\varepsilon(s) ds \geq \frac{\kappa}{2} \int_0^t e^{-(t-s)} s ds = \frac{\kappa}{2} (t - 1 + e^{-t}).
\end{align*}
Notice that $(t - 1 + e^{-t}) > 0$ for all $t > 0$. Since the negative perturbation term decays exponentially ($e^{-1/3\varepsilon}$), for any arbitrarily given $t \in (0, \mathcal{T}]$, we can choose $\varepsilon>0$ small enough such that the negative term is strictly bounded by $\frac{\kappa}{4} (t - 1 + e^{-t})$. 
Therefore, we arrive at the uniform lower bound for the $L^2_v$ norm for any arbitrary $t \in (0, \mathcal{T}]$:
\begin{align*}
    \| (\tilde{f}_\varepsilon^{(2)} - f)(t) \|_{L^2_v} \geq \frac{1}{\| \phi \|_{L^2_v}} \left( \frac{\kappa}{4} (t - 1 + e^{-t}) \right) := C_0 (t - 1 + e^{-t}) > 0.
\end{align*}
This shows that the error grows strictly away from zero for any time $t>0$, demonstrating that a vanishing PDE residual does not guarantee a small solution error.
\end{proof}
\end{proposition}

Similarly, Figure \ref{fig:loss_vs_error}(b) illustrates the corresponding behavior for the standard $L^2$ PINN loss. Even as the standard loss becomes vanishingly small, the error, $\|f-\tilde{f}_\varepsilon^{(2)}\|_{L_t^\infty L_v^2}$, fails to decay.

\begin{figure}[htbp]
    \centering
    \begin{subfigure}[b]{0.48\textwidth}
        \centering
        \includegraphics[width=\textwidth]{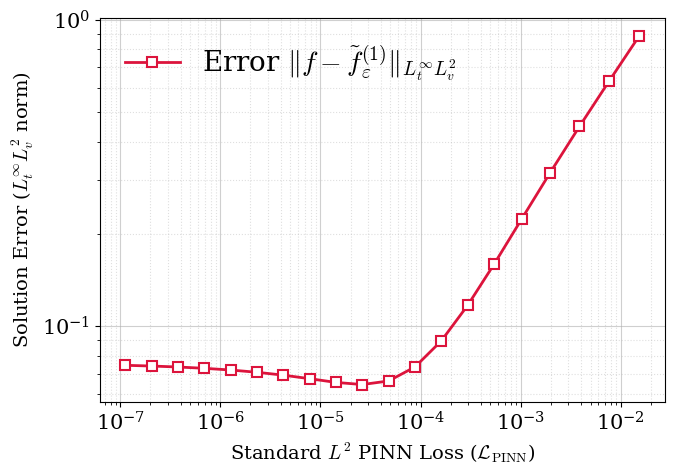}
        \caption{}
        \label{fig:example1_error}
    \end{subfigure}
    \hfill
    \begin{subfigure}[b]{0.48\textwidth}
        \centering
        \includegraphics[width=\textwidth]{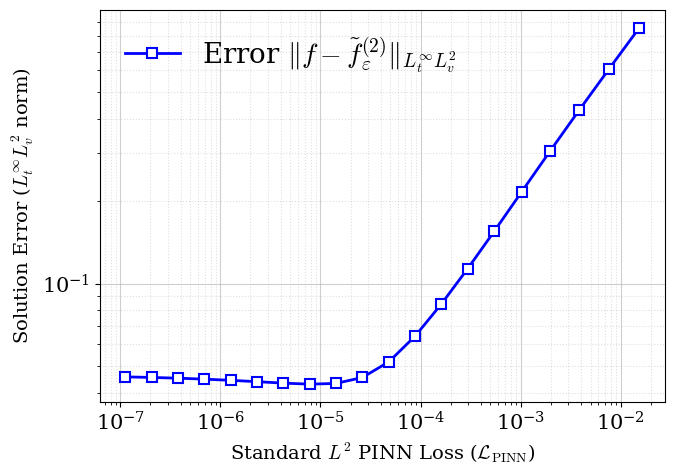}                
        \caption{}
        \label{fig:example2_error}
    \end{subfigure}
    \caption{Loss-accuracy curves for the explicit counterexamples: (a) Example 1 and (b) Example 2.}
    \label{fig:loss_vs_error}
\end{figure}

As demonstrated in the two examples above, these counterexamples are not pathological mathematical artifacts. Neither example exhibits singular behaviors, such as blowing up or vanishing within a finite time. The macroscopic moments (mass, momentum, and energy) of these approximate solutions remain strictly positive and uniformly bounded over time. This ensures that the counterexamples reside within the standard function spaces conventionally used to establish the well-posedness of the BGK model.

We also note that these counterexamples are not intended to represent typical optimization trajectories of neural networks. Instead, as a potential failure case, they illustrate the intrinsic vulnerability and limitations of the standard $L^2$-based PINN loss. The existence of such a case demonstrates that a small error concentrated in the high-velocity region can amplify into a significant discrepancy in the macroscopic moments. This undermines the reliability of the trained BGK model when relying solely on the standard $L^2$ PINN loss. Consequently, an alternative loss formulation that penalizes high-velocity tail errors to guarantee convergence is required, motivating the theoretically grounded weighted PINN loss proposed in the next section.

\section{Weighted PINN Loss and Stability Analysis}
\label{sec:stability}

As demonstrated in Section \ref{sec:counterexamples}, the standard $L^2$ PINN loss is fundamentally inadequate for solving the BGK model. The explicit counterexamples reveal a critical vulnerability: an approximation can achieve a vanishingly small standard loss ($\mathcal{L}_{\mathrm{PINN}} \to 0$) while suffering from a non-vanishing error with respect to the true solution. The key insight drawn from these results is that small, highly concentrated errors in the high-velocity region can corrupt the macroscopic energy moment. Therefore, it is essential to rigorously control the velocity moments of the residual errors at least up to the second order (i.e., bounding $\int |v|^2 |Res| \,dv$).

To achieve this, we naturally consider introducing a velocity-dependent weight $w(v)$ into the standard framework. For instance, the loss function for the PDE residual can be reformulated as the squared weighted $L^2$ norm:
\begin{align*}
    \mathcal{L}_{w,pde}(\tilde{f}) := \int_0^\mathcal{T} \int_{\mathbb{R}^3} \int_{[0,1]^3} \left| w(v) \left( \partial_t \tilde{f} + v \cdot \nabla_x \tilde{f} -\frac{1}{\mathrm{Kn}} \big( \mathcal{M}[\tilde{f}] - \tilde{f} \big) \right) \right|^2 dx dv dt.
\end{align*}
Similarly, this reformulation can be straightforwardly applied to the initial and boundary condition residuals. By appropriately choosing the weight function (e.g., $w(v)=1+\alpha|v|^\beta$), one can sufficiently control the second-order moments of the PDE and IC residuals, thereby preventing our approximations from converging to the specific incorrect solutions presented in Section \ref{sec:counterexamples}. However, successfully ruling out these particular counterexamples does not inherently guarantee that this empirically motivated modification is sufficient to ensure the overall reliability of the approximate solution in general.

In this section, we address this by rigorously establishing a stability theory for the weighted PINN loss to the BGK model. Specifically, we prove that under suitable integrability conditions on the weight $w(v)$, minimizing the weighted residual strictly enforces the convergence of the approximate solution to the exact solution.

To perform the stability analysis, we first define $f(t,x,v)$ as the exact solution satisfying the BGK problem \eqref{eq:bgk} with $\mathrm{Kn}=1$. Correspondingly, for any approximate solution $\tilde{f}(t,x,v)$, we define the residuals as follows:
\begin{align}
    Res_{pde}(t,x,v) &:= \partial_t \tilde{f} + v\cdot \nabla_x \tilde{f} - \big(\mathcal{M}[\tilde{f}] - \tilde{f}\big), \quad \textrm{in} \; (0,\mathcal{T}) \times (0,1)^3 \times \mathbb{R}^3, \label{eq:Res_pde} \\
    Res_{ini}(x,v) &:= \tilde{f}(0,x,v) - f_0(x,v), \quad \textrm{on}\; [0,1]^3 \times \mathbb{R}^3, \label{eq:Res_ini} \\
    Res_{bc, i}(t, \hat{x}_i, v) &:= \tilde{f}(t, x|_{x_i=1}, v) - \tilde{f}(t, x|_{x_i=0}, v), \quad i=1,2,3, \label{eq:Res_bc}
\end{align}
where $x|_{x_i=1}$ and $x|_{x_i=0}$ denote the opposite boundary faces for each spatial dimension $i$, for instance, $x|_{x_1=1}=(1,x_2,x_3)\in\mathbb{R}^3$.

Before embarking on the stability analysis, we define the function space in which the exact and approximate solutions reside. Instead of seeking stability in an arbitrarily large or irregular space, we naturally restrict our attention to a physically meaningful class of velocity distributions.

\begin{definition}[Ansatz Space] \label{def:ansatz_space}
Let $M=(C_{\rho, \min}, C_{\rho, \max}, C_{u, \max}, C_{T, \min}, C_{T, \max},C_\infty)$ denote a generic set of positive structural constants. We define the ansatz space $\mathcal{X}_{M}$ on $[0,\mathcal{T}] \times [0,1]^3 \times \mathbb{R}^3$ as the set of non-negative functions $f \in C^1\left([0,\mathcal{T}]\times[0,1]^3;L_2^1(\mathbb{R}^3_v)\cap L^2(\mathbb{R}^3_v)\right)$ satisfying the following condition:
\begin{itemize}

    \item \textbf{Uniform boundedness of macroscopic moments:} There exist positive constants such that for all $(t,x)$,
    \begin{align*}
        0 < C_{\rho, \min} \leq \rho_f(t,x) \leq C_{\rho, \max},\\
        0 < C_{T, \min} \leq T_f(t,x) \leq C_{T, \max},\\
        |u_f(t,x)| \leq C_{u, \max}.
    \end{align*}
    \item \textbf{Uniform boundedness of Norm:} The following norm is bounded by  
    \begin{align*} 
        \underset{t,x}{\text{ess}\sup}\|f\|_{L^2(\mathbb{R}_v^3)}\leq C_\infty
    \end{align*}
\end{itemize}
\end{definition}

\begin{Remark}[On the exact solution]
For initial data $f_0$ being sufficiently regular and having sufficiently fast decaying rates in velocity domain, the exact solution $f$ of the BGK model belongs to  $\mathcal{X}_{M}$ for some $M$ and some $q>0$. More details can be found in \cite{yun2015ellipsoidal}.
\end{Remark}

\begin{Remark}[On the approximate solution]
It is reasonable to assume that the approximate solution $\tilde{f}$ generated by a neural network also belongs to $\mathcal{X}_{M}$. In practice, recent methodologies for solving the BGK model via physics-informed neural networks explicitly structure the network ansatz to enforce this behavior. Specifically, the approximate solution is often parameterized using a micro-macro decomposition form:
$$\tilde{f}(t,x,v) = \mathcal{M}\left[\tilde{\rho}_{eq}(t,x), \tilde{u}(t,x), \tilde{T}_{eq}(t,x)\right](v) + e^{-|v-\mu|^2/\tau}\times\tilde{f}^{neq}(t,x,v),$$
where $\tilde{\rho}_{eq}, \tilde{u}_{eq}, \tilde{T}_{eq}$ and $\tilde{f}^{neq}$ are outputs of the neural networks. Assuming that the macroscopic outputs remain bounded (i.e., do not blow up) during training and $\tilde{f}^{neq}$ exhibits at most polynomial growth with respect to the velocity variable, the exponential weight strictly dominates. As a result, the approximate solution naturally exhibits rapid decay in the velocity domain, thereby belonging to $\mathcal{X}_{M}$ for some structural constants $M$.
\end{Remark}

With the ansatz space $\mathcal{X}_{M}$ defined, we now recall a Lipschitz-type continuity of the local Maxwellian operator within this space. This property is crucial for establishing our desirable stability estimate.

\begin{lemma}[\cite{yun2015ellipsoidal}, Lipschitz-type continuity of the local Maxwellian] \label{lem:M_lipschitz}
Suppose that $f$ and $\tilde{f}$ belong to the admissible ansatz space $\mathcal{X}_{M}$. Then, there exist positive constants $L_M$ and $c_M$, depending only on the structural constants of $M$, such that for all $(t,x,v) \in [0,\mathcal{T}] \times [0,1]^3 \times \mathbb{R}^3$, we have:
\begin{align*}
    |\mathcal{M}(f)(t,x,v) - \mathcal{M}(\tilde{f})(t,x,v)| \leq L_M e^{-c_M|v|^2} \int_{\mathbb{R}^3} (1+|v_*|^2)|f(t,x,v_*) - \tilde{f}(t,x,v_*)| dv_*.
\end{align*}
\begin{proof}
By the definition of the ansatz space $\mathcal{X}_{M}$, the macroscopic quantities $(\rho_f, u_f, T_f)$ and $(\rho_{\tilde{f}}, u_{\tilde{f}}, T_{\tilde{f}})$ are strictly positive and uniformly bounded. Since the mapping $(\rho, u, T) \mapsto \mathcal{M}_{(\rho, u, T)}(v)$ is continuously differentiable with respect to its parameters, applying the mean value theorem yields that there exist positive constants $C_1$ and $c_M$ (depending only on the bounds in $M$) such that
\begin{align}\label{eq:M_lipschitz_intermediate}
    |\mathcal{M}(f) - \mathcal{M}(\tilde{f})| \leq C_1 e^{-c_M|v|^2} \Big( |\rho_f - \rho_{\tilde{f}}| + |u_f - u_{\tilde{f}}| + |T_f - T_{\tilde{f}}| \Big).
\end{align}
Next, we bound the differences of the macroscopic quantities in terms of the distribution functions. For the density, we readily have:
\begin{align}\label{eq:bound_rho}
    |\rho_f - \rho_{\tilde{f}}| = \left| \int_{\mathbb{R}^3} (f - \tilde{f}) dv \right| \leq \int_{\mathbb{R}^3} |f - \tilde{f}| dv \leq \int_{\mathbb{R}^3} (1+|v|^2)|f - \tilde{f}| dv.
\end{align}
For the macroscopic velocity, we algebraically expand the difference as follows:
\begin{align}\label{eq:bound_u}
\begin{split}
    |u_f - u_{\tilde{f}}| &= \left| \frac{1}{\rho_f}\int_{\mathbb{R}^3} v f dv - \frac{1}{\rho_{\tilde{f}}}\int_{\mathbb{R}^3} v \tilde{f} dv \right| \\
    &= \left| \frac{1}{\rho_f}\int_{\mathbb{R}^3} v (f - \tilde{f}) dv - \frac{u_{\tilde{f}}}{\rho_f}(\rho_f - \rho_{\tilde{f}}) \right| \\
    &\leq \frac{1}{C_{\rho, \min}} \int_{\mathbb{R}^3} |v| |f - \tilde{f}| dv + \frac{C_{u, \max}}{C_{\rho, \min}} \int_{\mathbb{R}^3} |f - \tilde{f}| dv \\
    &\leq C_2 \int_{\mathbb{R}^3} (1+|v|^2)|f - \tilde{f}| dv,
\end{split}
\end{align}
for some constant $C_2 > 0$ depending on $M$. Similarly, for the temperature, recalling the energy relation $3 \rho_f T_f = \int_{\mathbb{R}^3} |v|^2 f dv - \rho_f |u_f|^2$ in velocity space, we obtain:
\begin{align*}
    |T_f - T_{\tilde{f}}| &= \left| \left( \frac{1}{3 \rho_f}\int_{\mathbb{R}^3} |v|^2 f dv - \frac{|u_f|^2}{3} \right) - \left( \frac{1}{3 \rho_{\tilde{f}}}\int_{\mathbb{R}^3} |v|^2 \tilde{f} dv - \frac{|u_{\tilde{f}}|^2}{3} \right) \right| \\
    &\leq \frac{1}{3} \left| \frac{1}{\rho_f}\int_{\mathbb{R}^3} |v|^2 (f-\tilde{f}) dv - \frac{\rho_f-\rho_{\tilde{f}}}{\rho_f \rho_{\tilde{f}}} \int_{\mathbb{R}^3} |v|^2 \tilde{f} dv \right| + \frac{1}{3} \left| |u_f|^2 - |u_{\tilde{f}}|^2 \right|.
\end{align*}
Since $\tilde{f} \in \mathcal{X}_{M}$, its macroscopic energy is uniformly bounded by some constant $C_E > 0$. Using the identity $|u_f|^2 - |u_{\tilde{f}}|^2 = (u_f - u_{\tilde{f}}) \cdot (u_f + u_{\tilde{f}})$, we further bound the temperature difference:
\begin{align}\label{eq:bound_T}
\begin{split}
    |T_f - T_{\tilde{f}}| &\leq \frac{1}{3 C_{\rho, \min}} \int_{\mathbb{R}^3} |v|^2 |f-\tilde{f}| dv + \frac{C_E}{3 C_{\rho, \min}^2} \int_{\mathbb{R}^3} |f-\tilde{f}| dv + \frac{2C_{u, \max}}{3} |u_f - u_{\tilde{f}}| \\
    &\leq C_3 \int_{\mathbb{R}^3} (1+|v|^2)|f-\tilde{f}| dv,
\end{split}
\end{align}
where $C_3 > 0$ depends on $M$ via \eqref{eq:bound_rho} and \eqref{eq:bound_u}. Finally, substituting the bounds \eqref{eq:bound_rho}, \eqref{eq:bound_u}, and \eqref{eq:bound_T} back into \eqref{eq:M_lipschitz_intermediate}, we conclude that there exists a constant $L_M > 0$ such that
\begin{align*}
    |\mathcal{M}(f)(t,x,v) - \mathcal{M}(\tilde{f})(t,x,v)| \leq L_M e^{-c_M|v|^2} \int_{\mathbb{R}^3} (1+|v_*|^2)|f(t,x,v_*) - \tilde{f}(t,x,v_*)| dv_*,
\end{align*}
which completes the proof.
\end{proof}
\end{lemma}

Equipped with the Lipschitz property of the Maxwellian, we are now ready to establish our main stability estimate. To achieve this, we introduce a general velocity-dependent weight function $w(v) \geq 1$. With this in hand, we present the following theorem.

\begin{theorem}[Stability Estimate] \label{thm:main_stability}
Let $f$ be the exact solution to \eqref{eq:bgk} and $\tilde{f}$ be an approximate solution. Assume that $f$ and $\tilde{f}$ belong to $\mathcal{X}_{M}$ for some structural constants $M$. Suppose that the weight function $w(v)\geq1$ satisfies the following integrability conditions:
\begin{align} \label{eq:weight_cond}
    \int_{\mathbb{R}^3} \frac{(1+|v|^2)^2}{w(v)^2} dv < \infty, \quad \text{and} \quad \int_{\mathbb{R}^3} w(v)^2 e^{-2c_M|v|^2} dv < \infty,
\end{align}
where $c_M > 0$ is the constant given in Lemma \ref{lem:M_lipschitz}. Then, we have the following stability estimate for all $t \in [0,\mathcal{T}]$:
\begin{align*}
    \|w(f-\tilde{f})(t)\|_2^2 \leq C_* \left( \|w \, Res_{ini}\|_2^2 + \int_0^t \| w \, Res_{pde}(s) \|_2^2 ds + \sum_{i=1}^3 \left( \int_0^t \|v_i w^2 \, Res_{bc,i}(s)\|_{\partial_i}^2 ds \right)^{1/2} \right),
\end{align*}
where $\| \cdot \|_{\partial_i}$ denotes the $L^2$ norm over the respective boundary faces omitting the $i$-th spatial coordinate, and $C_* > 0$ is a constant depending on $\mathcal{T}$, $M$, and $w$.
\end{theorem}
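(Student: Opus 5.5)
We run a weighted $L^2$ energy estimate for the error $g:=\tilde f-f$, and close it with Gr\"onwall's inequality using Lemma \ref{lem:M_lipschitz} and the weight conditions \eqref{eq:weight_cond}.

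\textbf{Error equation.} Subtracting \eqref{eq:bgk} (with $\mathrm{Kn}=1$) from the definition \eqref{eq:Res_pde} of the residual gives
\begin{align*}
\partial_t g + v\cdot\nabla_x g + g = \big(\mathcal{M}[\tilde f]-\mathcal{M}[f]\big) + Res_{pde}, \qquad g(0)=Res_{ini}.
\end{align*}

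\textbf{Energy identity.} Multiply by $w^2 g$ and integrate over $[0,1]^3\times\mathbb{R}^3$. This yields
\begin{align*}
\tfrac12\tfrac{d}{dt}\|wg\|_2^2 + \|wg\|_2^2 + B(t) = \int w^2 g\,(\mathcal{M}[\tilde f]-\mathcal{M}[f]) + \int w^2 g\, Res_{pde},
\end{align*}
where the transport term has become a boundary term. Since $f$ is exactly periodic, the jump of $g$ across opposite faces equals $Res_{bc,i}$. Hence
\begin{align*}
B(t) = \tfrac12\sum_i\int\!\!\int v_i w^2\big(\tilde f|_{x_i=1}^2-\tilde f|_{x_i=0}^2\big)+\dots,
\end{align*}
which, written in terms of $g$, equals
\begin{align*}
\tfrac12\sum_i\int v_i w^2\,Res_{bc,i}\,\big(g|_{x_i=1}+g|_{x_i=0}\big)\,d\hat x_i\,dv .
\end{align*}
By Cauchy--Schwarz,
\begin{align*}
|B(t)|\le \|v_iw^2Res_{bc,i}\|_{\partial_i}\cdot\big(\|g|_{x_i=1}\|_{\partial_i}+\|g|_{x_i=0}\|_{\partial_i}\big).
\end{align*}
The trace norms of $g$ are bounded by a constant, because $f,\tilde f\in\mathcal{X}_M$: both are $C^1$ in $(t,x)$ with values in $L^2_v$, and $\sup\|f\|_{L^2_v}\le C_\infty$. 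This is exactly why the boundary contribution enters the final estimate only through a square root. Integrating in time and applying Cauchy--Schwarz in $t$ gives a bound of the form $C\,(\int_0^t\|v_iw^2Res_{bc,i}\|^2_{\partial_i})^{1/2}$. If the unweighted $L^2_v$ trace bound of $g$ turns out to be insufficient, I will instead move one factor of $w$ onto $g$ and rely on the ansatz bound.

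\textbf{Source terms.} The residual term is bounded by Young's inequality: $\int w^2 g\,Res_{pde} \le \tfrac12\|wg\|_2^2+\tfrac12\|w\,Res_{pde}\|_2^2$.

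\textbf{Main step: the Maxwellian difference.} At each fixed $x$, Lemma \ref{lem:M_lipschitz} gives
\begin{align*}
\Big|\int w^2 g\,(\mathcal{M}[\tilde f]-\mathcal{M}[f])\,dv\Big| \le L_M\Big(\int w^2 e^{-c_M|v|^2}|g|\,dv\Big)\Big(\int(1+|v_*|^2)|g|\,dv_*\Big).
\end{align*}
For the first factor, Cauchy--Schwarz gives $\le \|wg\|_{L^2_v}\,(\int w^2e^{-2c_M|v|^2})^{1/2}$, which is finite by the second condition in \eqref{eq:weight_cond}. For the second factor, writing $(1+|v|^2)|g| = \frac{1+|v|^2}{w}\,w|g|$ gives $\le (\int (1+|v|^2)^2w^{-2})^{1/2}\|wg\|_{L^2_v}$, finite by the first condition. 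So the Maxwellian term is bounded by $C\|wg(t,x)\|_{L^2_v}^2$ pointwise in $x$, and integrating over $x$ yields $C\|wg\|_2^2$.

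\textbf{Closing.} Combining the three estimates,
\begin{align*}
\tfrac{d}{dt}\|wg\|_2^2 \le C\|wg\|_2^2 + \|w\,Res_{pde}\|_2^2 + (\text{boundary term}).
\end{align*}
Integrating from $0$ to $t$ and applying Gr\"onwall's inequality on $[0,\mathcal{T}]$ gives the stated estimate, with $C_*$ depending on $e^{C\mathcal{T}}$, $M$, and $w$.

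The main obstacle is expected to be the boundary term. It has no favourable sign, since $f$ is periodic but $\tilde f$ is not. Bounding it requires controlling the traces of $g$ (at least in the weighted sense) uniformly via the ansatz space, and the resulting constants must stay independent of $\tilde f$ beyond $M$.
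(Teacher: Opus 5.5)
Your proposal is correct and follows essentially the same route as the paper. Multiplying the error equation by $w^2 g$ is the paper's step of multiplying the weighted error equation by $e=w(f-\tilde f)$, and every subsequent step matches the paper: the boundary term is rewritten using periodicity of $f$ and bounded with the unweighted $L^2_v$ bound $C_\infty$ from $\mathcal{X}_M$; the Maxwellian term is handled by Lemma~\ref{lem:M_lipschitz} plus two Cauchy--Schwarz steps matching the two conditions in \eqref{eq:weight_cond}; $Res_{pde}$ is handled by Young's inequality; and the argument closes with Gr\"onwall and Cauchy--Schwarz in time.

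The unweighted trace bound is exactly what the paper uses and it suffices, so your fallback of moving a factor of $w$ onto $g$ is unnecessary. It would also fail, since $\mathcal{X}_M$ gives no control of weighted norms of $g$.
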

\begin{proof}
Let us define the weighted error function as $e(t,x,v) := w(v)(f(t,x,v) - \tilde{f}(t,x,v))$. By the definition of the residuals \eqref{eq:Res_pde}, the approximate solution $\tilde{f}$ satisfies the following perturbed BGK model:
\begin{align} 
    \partial_t \tilde{f} + v\cdot \nabla_x \tilde{f} &= \mathcal{M}[\tilde{f}] - \tilde{f} + Res_{pde}(\tilde{f}), \quad \textrm{in} \; (0,\mathcal{T}) \times (0,1)^3 \times \mathbb{R}^3, \label{eq:BGK_approx} \\
    \tilde{f}(0,x,v) &= f_0(x,v) + Res_{ini}(\tilde{f}), \quad \textrm{on}\; [0,1]^3 \times \mathbb{R}^3.\label{eq:BGK_approx_initial}
\end{align}
Multiplying the governing equations \eqref{eq:bgk} and \eqref{eq:BGK_approx} by $w(v)$ and subtracting them yields the error dynamics:
\begin{align}\label{eq:error_dynamics}
    \partial_t e + v \cdot \nabla_x e = w(v)\big(\mathcal{M}(f) - \mathcal{M}(\tilde{f})\big) - e - w(v)Res_{pde}(\tilde{f}).
\end{align}
Multiplying \eqref{eq:error_dynamics} by $e$ and integrating over the phase space $(x,v) \in [0,1]^3 \times \mathbb{R}^3$, we obtain:
\begin{align}\label{eq:energy_identity}
    \frac{1}{2} \frac{d}{dt} \|e(t)\|_2^2 + \|e(t)\|_2^2 = \iint w\big(\mathcal{M}(f) - \mathcal{M}(\tilde{f})\big) e \,dx dv - \iint (v \cdot \nabla_x e) e \,dx dv - \iint w Res_{pde} e \,dx dv.
\end{align}
We proceed to estimate the three terms on the right-hand side. For the transport term, applying the divergence theorem and factoring the difference of squares gives:
\begin{align*}
    -\iint (v \cdot \nabla_x e) e \,dx dv &= - \frac{1}{2} \sum_{i=1}^3 \int_{\mathbb{R}^3} \int_{[0,1]^2} v_i \Big( \big[w(v) (f-\tilde{f})|_{x_i=1}\big]^2 - \big[w(v) (f-\tilde{f})|_{x_i=0}\big]^2 \Big) d\hat{x}_i dv \\
    &= \frac{1}{2} \sum_{i=1}^3 \int_{[0,1]^2} \int_{\mathbb{R}^3} \big( v_i w(v)^2 Res_{bc,i} \big) \big( (f-\tilde{f})|_{x_i=1} + (f-\tilde{f})|_{x_i=0} \big) dv d\hat{x}_i.
\end{align*}
Applying the Cauchy-Schwarz inequality strictly over the velocity integral yields:
\begin{align*}
    -\iint (v \cdot \nabla_x e) e \,dx dv &\leq \frac{1}{2} \sum_{i=1}^3 \int_{[0,1]^2} \|v_i w^2 Res_{bc,i}\|_{L^2_v} \|(f-\tilde{f})|_{x_i=1} + (f-\tilde{f})|_{x_i=0}\|_{L^2_v} d\hat{x}_i \\
    &\leq \underset{t,x}{\text{ess}\sup}\| (f-\tilde{f})(t,x,\cdot) \|_{L^2(\mathbb{R}^3_v)} \sum_{i=1}^3 \int_{[0,1]^2} \|v_i w^2 Res_{bc,i}\|_{L^2_v} d\hat{x}_i\\
    &\leq \underset{t,x}{\text{ess}\sup}\| (f-\tilde{f})(t,x,\cdot) \|_{L^2(\mathbb{R}^3_v)} \sum_{i=1}^3 \|v_i w^2 Res_{bc,i}\|_{\partial_i}
\end{align*}
Since $f, \tilde{f} \in \mathcal{X}_{M}$, we have
$$\underset{t,x}{\text{ess}\sup}\| f-\tilde{f} \|_{L^2(\mathbb{R}^3_v)} < 2C_\infty.$$ 
Substituting this back, the boundary term is securely bounded by:
\begin{align*}
    -\iint (v \cdot \nabla_x e) e \,dx dv \leq 2C_\infty \sum_{i=1}^3 \|v_i w^2 Res_{bc,i}\|_{\partial_i}.
\end{align*}

Then, we apply Lemma \ref{lem:M_lipschitz} to obtain:
\begin{align*}
    \iint w \big|\mathcal{M}(f) - \mathcal{M}(\tilde{f})\big| |e| \,dx dv &\leq \iint w(v) L_M e^{-c_M|v|^2} \left( \int_{\mathbb{R}^3} (1+|v_*|^2) |f - \tilde{f}| dv_* \right) |e(t,x,v)| dx dv\\
    &=\iint w(v) L_M e^{-c_M|v|^2} \left( \int_{\mathbb{R}^3} \frac{1+|v_*|^2}{w(v_*)} |e| dv_* \right) |e(t,x,v)| dx dv.
\end{align*}
Due to the first integrability condition in \eqref{eq:weight_cond}, the following constant is strictly finite:
\begin{align*}
    C_w := \int_{\mathbb{R}^3} \frac{(1+|v_*|^2)^2}{w(v_*)^2} dv_* < \infty.
\end{align*}
Utilizing this and applying the Cauchy-Schwarz inequality, the inner integral is bounded by:
\begin{align*}
    \int_{\mathbb{R}^3} \frac{1+|v_*|^2}{w(v_*)} |e(t,x,v_*)| dv_* \leq \left( \int_{\mathbb{R}^3} \frac{(1+|v_*|^2)^2}{w(v_*)^2} dv_* \right)^{1/2} \|e(t,x,\cdot)\|_{L^2_v} = C_w^{1/2} \|e(t,x,\cdot)\|_{L^2_v}.
\end{align*}
Substituting this back and applying the Cauchy-Schwarz inequality once more over the $v$-integral yields:
\begin{align*}
    \iint w \big|\mathcal{M}(f) - \mathcal{M}(\tilde{f})\big| |e| \,dx dv &\leq L_M C_w^{1/2} \int_{[0,1]^3} \|e\|_{L^2_v} \left( \int_{\mathbb{R}^3} w(v) e^{-c_M|v|^2} |e(t,x,v)| dv \right) dx \\
    &\leq A \int_{[0,1]^3} \|e\|_{L^2_v}^2 dx = A \|e\|_2^2,
\end{align*}
where the constant $A$ is defined by:
\begin{align*}
    A := L_M C_w^{1/2} \left( \int_{\mathbb{R}^3} w(v)^2 e^{-2c_M|v|^2} dv \right)^{1/2}.
\end{align*}
Note that $A$ is strictly finite due to the second integrability condition in \eqref{eq:weight_cond}. Lastly, using Young's inequality for the PDE residual term, we have:
\begin{align*}
    -\iint w Res_{pde} e \,dx dv \leq \frac{1}{2} \|w Res_{pde}\|_2^2 + \frac{1}{2} \|e\|_2^2.
\end{align*}

Collecting all estimates and substituting them into the energy identity \eqref{eq:energy_identity}, we arrive at the differential inequality:
\begin{align*}
    \frac{d}{dt} \|e\|_2^2 \leq (2A - 1) \|e\|_2^2 + \|w Res_{pde}\|_2^2 +  2C_\infty \sum_{i=1}^3 \|v_i w^2 Res_{bc,i}\|_{\partial_i}.
\end{align*}
Applying Gr\"{o}nwall's inequality from $0$ to $t$ gives:
\begin{align*}
    \|e(t)\|_2^2 \leq e^{(2A-1)t} \|e(0)\|_2^2 + \int_0^t e^{(2A-1)(t-s)} \left( \|w Res_{pde}(s)\|_2^2 +  2C_\infty \sum_{i=1}^3 \|v_i w^2 Res_{bc,i}(s)\|_{\partial_i} \right) ds.
\end{align*}
Noting that $\|e(0)\|_2^2 = \|w Res_{ini}\|_2^2$, and applying the Cauchy-Schwarz inequality to the time integral of the boundary residual $\int_0^t 1 \cdot \|v_i w^2 Res_{bc,i}(s)\|_{\partial_i} ds \leq \sqrt{\mathcal{T}} \left( \int_0^t \|v_i w^2 Res_{bc,i}(s)\|_{\partial_i}^2 ds \right)^{1/2}$, we obtain the desired bound:
\begin{align*}
    \|e(t)\|_2^2 \leq C_* \left( \|w Res_{ini}\|_2^2 + \int_0^t \|w Res_{pde}(s)\|_2^2 ds + \sum_{i=1}^3 \left( \int_0^t \|v_i w^2 Res_{bc,i}(s)\|_{\partial_i}^2 ds \right)^{1/2} \right),
\end{align*}
where $C_* = \max(1, 2C_\infty\sqrt{\mathcal{T}}) e^{|2A-1|\mathcal{T}}$. This completes the proof.
\end{proof}

Motivated by the stability estimate in Theorem \ref{thm:main_stability}, we propose the theory-guided weighted PINN loss function, denoted as $\mathcal{L}_{w\text{-PINN}}$, incorporating a weight function $w(v)$ satisfying \eqref{eq:weight_cond} (for instance, $w(v) = 1 + \alpha |v|^\beta$ with $\alpha>0$ and $\beta>7/2$):
\begin{equation}\label{eq:w_pinn_loss_full}
    \mathcal{L}_{w\text{-PINN}}(\tilde{f}) = \mathcal{L}_{w,pde}(\tilde{f}) + \lambda_{bc}\mathcal{L}_{w,bc}(\tilde{f}) + \lambda_{ini} \mathcal{L}_{w,ini}(\tilde{f}),
\end{equation}
where each weighted loss component is explicitly formulated as:
\begin{equation}\label{eq:w_L2_components}
\begin{split}
    \mathcal{L}_{w,pde}(\tilde{f}) &:= \int_0^\mathcal{T} \int_{\mathbb{R}^3} \int_{[0,1]^3} \left| w(v) \left( \partial_t \tilde{f} + v \cdot \nabla_x \tilde{f} -\frac{1}{\mathrm{Kn}} \big( \mathcal{M}[\tilde{f}] - \tilde{f} \big) \right) \right|^2 dx dv dt, \\
    \mathcal{L}_{w,bc}(\tilde{f}) &:= \sum_{i=1}^3 \int_0^\mathcal{T} \int_{\mathbb{R}^3} \int_{[0,1]^2} \left| v_i w(v)^2 \big( \tilde{f}(t, x|_{x_i=1}, v) - \tilde{f}(t, x|_{x_i=0}, v) \big) \right|^2 d\hat{x}_i dv dt, \\
    \mathcal{L}_{w,ini}(\tilde{f}) &:= \int_{\mathbb{R}^3} \int_{[0,1]^3} \left| w(v) \big( \tilde{f}(0, x, v) - f_0(x,v) \big) \right|^2 dx dv.
\end{split}    
\end{equation}
Theorem \ref{thm:main_stability} directly implies that, unlike the standard $L^2$ PINN loss, minimizing this newly defined $\mathcal{L}_{w\text{-PINN}}$ strictly guarantees the convergence of the approximate solution to the exact solution.

\begin{corollary}[Convergence of the Approximate Solution] \label{cor:solution_convergence}
Let the exact solution $f$ and the approximate solution $\tilde{f}$ belong to the  ansatz space $\mathcal{X}_{M}$, satisfying \eqref{eq:bgk} and \eqref{eq:BGK_approx}, respectively. Then, for any weight function $w(v) \geq 1$ satisfying the integrability conditions \eqref{eq:weight_cond}, we have as $\mathcal{L}_{w\text{-PINN}}(\tilde{f}) \to 0$:
\begin{align*}
    \|w(f-\tilde{f})(t)\|_2 \to 0, \quad \forall t \in [0,\mathcal{T}],
\end{align*}
and furthermore, for all $1\leq p\leq2$
\begin{align*}
    \|(f-\tilde{f})(t)\|_p \to 0, \quad \forall t \in [0,\mathcal{T}].
\end{align*}
\begin{proof}
The $L^2$ convergence ($p=2$) is a direct consequence of Theorem \ref{thm:main_stability} and the fact that $w(v)\geq1$. For $1 \leq p < 2$, applying H\"{o}lder's inequality yields
\begin{align*}
    \|(f-\tilde{f})(t)\|_p^p = \int_{\mathbb{R}^3} \frac{1}{w(v)^p} |w(v)(f-\tilde{f})|^p dv \leq \left( \int_{\mathbb{R}^3} |w(v)(f-\tilde{f})|^2 dv \right)^{\frac{p}{2}} \left( \int_{\mathbb{R}^3} w(v)^{-\frac{2p}{2-p}} dv \right)^{\frac{2-p}{2}}.
\end{align*}
Since $w(v)\geq1$, we have $w(v)^{-\frac{2p}{2-p}} \leq w(v)^{-2}$. The integrability condition \eqref{eq:weight_cond} ensures $\int w(v)^{-2} dv < \infty$. Therefore, the $L^p$ error is strictly bounded by the weighted $L^2$ error, completing the proof.
\end{proof}
\end{corollary}

\begin{Remark}[On the counterexamples]
It is imperative to note that the explicit counterexamples $\{\tilde{f}_\varepsilon^{(1)}\}_{\varepsilon>0}$ and $\{\tilde{f}_\varepsilon^{(2)}\}_{\varepsilon>0}$ constructed in Section \ref{sec:counterexamples} strictly belong to $\mathcal{X}_{M}$, as their unweighted $L^2_v$ norms are uniformly bounded by $\mathcal{O}(\varepsilon)$. While their standard $L^2$ PINN loss vanishes as $\mathcal{O}(\varepsilon^2)$, their newly defined weighted PINN loss $\mathcal{L}_{w\text{-PINN}}$ successfully filters them out by diverging to infinity. 

To rigorously prove this for any weight function $w(v)$ satisfying the integrability condition \eqref{eq:weight_cond}, we can directly establish a lower bound for the weighted $L^2$ norm of $K_\varepsilon(v)$. Let us define a local ball $\Omega_\varepsilon = \{ v \in \mathbb{R}^3 : |v - \mathbf{u}_{\varepsilon}| < 1 \}$ centered at the peak of the perturbations, where $|\mathbf{u}_{\varepsilon}| = \varepsilon^{-1/2}$. For sufficiently small $\varepsilon$, we have $|v| \ge \frac{1}{2}\varepsilon^{-1/2}$ for all $v \in \Omega_\varepsilon$. 

On this domain $\Omega_\varepsilon$, the Maxwellian evaluates to $K_\varepsilon(v) \ge c_0 \varepsilon$ for some absolute constant $c_0 > 0$. Thus, the weighted norm is bounded below by:
\begin{align*}
    \|w K_\varepsilon\|_{L^2_v}^2 \ge \int_{\Omega_\varepsilon} w(v)^2 K_\varepsilon(v)^2 dv \ge c_0^2 \varepsilon^2 \int_{\Omega_\varepsilon} w(v)^2 dv.
\end{align*}
To bound the integral of $w(v)^2$ from below, we apply the Cauchy-Schwarz inequality over $\Omega_\varepsilon$:
\begin{align*}
    \left( \int_{\Omega_\varepsilon} |v|^2 dv \right)^2 = \left( \int_{\Omega_\varepsilon} \frac{|v|^2}{w(v)} w(v) dv \right)^2 \le \left( \int_{\Omega_\varepsilon} \frac{|v|^4}{w(v)^2} dv \right) \left( \int_{\Omega_\varepsilon} w(v)^2 dv \right).
\end{align*}
Since $|v| \ge \frac{1}{2}\varepsilon^{-1/2}$ on $\Omega_\varepsilon$, the integral on the left-hand side is bounded below by $\int_{\Omega_\varepsilon} \frac{1}{4}\varepsilon^{-1} dv = c_1 \varepsilon^{-1}$, where $c_1 = \frac{1}{4}|\Omega_\varepsilon| > 0$. Rearranging the inequality gives:
\begin{align*}
    \int_{\Omega_\varepsilon} w(v)^2 dv \ge \frac{c_1^2 \varepsilon^{-2}}{\int_{\Omega_\varepsilon} \frac{|v|^4}{w(v)^2} dv}.
\end{align*}
Combining these estimates, we obtain an explicit lower bound for the weighted loss:
\begin{align*}
    \|w K_\varepsilon\|_{L^2_v}^2 \ge c_0^2 \varepsilon^2 \left( \frac{c_1^2 \varepsilon^{-2}}{\int_{\Omega_\varepsilon} \frac{|v|^4}{w(v)^2} dv} \right) = \frac{c_0^2 c_1^2}{\int_{\Omega_\varepsilon} \frac{|v|^4}{w(v)^2} dv}.
\end{align*}
Recall that the fundamental integrability condition \eqref{eq:weight_cond} guarantees $\int_{\mathbb{R}^3} \frac{|v|^4}{w(v)^2} dv \le \int_{\mathbb{R}^3} \frac{(1+|v|^2)^2}{w(v)^2} dv < \infty$. As $\varepsilon \to 0$, the domain $\Omega_\varepsilon$ shifts towards infinity ($|v| \to \infty$). Because the total integral over $\mathbb{R}^3$ is finite, the integral over the tail domain $\Omega_\varepsilon$ must vanish, i.e., $\lim_{\varepsilon \to 0} \int_{\Omega_\varepsilon} \frac{|v|^4}{w(v)^2} dv = 0$. 

Consequently, the denominator of our lower bound approaches zero from above, which implies $\|w K_\varepsilon\|_{L^2_v}^2 \to \infty$ as $\varepsilon \to 0$. This directly means that $\mathcal{L}_{w\text{-PINN}} \to \infty$ for both $\tilde{f}_\varepsilon^{(1)}$ and $\tilde{f}_\varepsilon^{(2)}$ presented in Section \ref{sec:counterexamples}.
\end{Remark}

Furthermore, another notable advantage of this weighted norm is that it naturally yields rigorous control over the macroscopic physical quantities, in addition to the velocity distribution itself. As demonstrated in Section \ref{sec:counterexamples}, the standard PINN approach fails because the $L^2$ smallness of the residual alone is insufficient to suppress the spurious production of macroscopic moments. The following corollary illustrates that minimizing the proposed weighted $L^2$ error effectively resolves this issue by guaranteeing the $L^1$ convergence of the macroscopic quantities.

\begin{corollary}[Convergence of Macroscopic Quantities] \label{cor:macro_convergence}
For any weight function $w(v)$ satisfying the integrability conditions \eqref{eq:weight_cond}, the convergence of the weighted distance between $f$ and $\tilde{f}$ strictly guarantees the $L^1([0,1]^3)$ convergence of the macroscopic quantities. Specifically, there exists a constant $C>0$ depending on the weight $w:\mathbb{R}^3\rightarrow\mathbb{R}_+$ such that for all $t \in [0,\mathcal{T}]$,
\begin{align*}
    \| \rho_f(t) - \rho_{\tilde{f}}(t) \|_{L^1([0,1]^3)} + \| (\rho_f u_f)(t) - (\rho_{\tilde{f}} u_{\tilde{f}})(t) \|_{L^1([0,1]^3)} + \| E_f(t) - E_{\tilde{f}}(t) \|_{L^1([0,1]^3)} \leq C \| w(f-\tilde{f})(t) \|_2,
\end{align*}
where $E_f = 3\rho_f T_f + \rho_f |u_f|^2$ denotes the macroscopic energy.
\end{corollary}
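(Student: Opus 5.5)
The three macroscopic quantities are linear velocity moments of $f$. Indeed, $\rho_f=\int f\,dv$ and $\rho_f u_f=\int v f\,dv$. For the energy,
\[
E_f=3\rho_fT_f+\rho_f|u_f|^2=\int |v-u_f|^2 f\,dv+\rho_f|u_f|^2=\int |v|^2 f\,dv,
\]
where the last equality is the standard identity obtained by expanding the square. The differences are therefore integrals of $f-\tilde f$ against the fixed velocity weights $1$, $v$, $|v|^2$, and no Lipschitz argument on the nonlinear map $(\rho,u,T)$ is needed. This is the key simplification, and it explains why the statement uses $\rho u$ and $E$ rather than $u$ and $T$.

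Fix $t$ and $x$. The pointwise bound is
\begin{align*}
|\rho_f-\rho_{\tilde f}|+|\rho_fu_f-\rho_{\tilde f}u_{\tilde f}|+|E_f-E_{\tilde f}|
\le \int_{\mathbb{R}^3}\bigl(1+|v|+|v|^2\bigr)|f-\tilde f|\,dv
\le \frac32\int_{\mathbb{R}^3}\frac{1+|v|^2}{w(v)}\,w(v)|f-\tilde f|\,dv .
\end{align*}
The second inequality uses $|v|\le (1+|v|^2)/2$. Applying Cauchy--Schwarz in $v$, exactly as in the proof of Theorem~\ref{thm:main_stability}, bounds this by $\tfrac32 C_w^{1/2}\|w(f-\tilde f)(t,x,\cdot)\|_{L^2_v}$. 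Here $C_w=\int (1+|v|^2)^2w^{-2}\,dv<\infty$ by the first condition in \eqref{eq:weight_cond}.

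Next I integrate over $x\in[0,1]^3$ and apply Cauchy--Schwarz in $x$. Since the unit cube has measure one, $\int_{[0,1]^3}\|g(x,\cdot)\|_{L^2_v}\,dx\le \|g\|_{L^2_{x,v}}$. Taking $g=w(f-\tilde f)(t)$ gives the claimed estimate with $C=\tfrac32 C_w^{1/2}$, which depends only on $w$.

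The only points needing care are measure-theoretic. First, the moments must be well defined, which holds because $f,\tilde f\in\mathcal{X}_M$ take values in $L^1_2(\mathbb{R}^3_v)$ for each $(t,x)$. Second, the identity $E_f=\int|v|^2f\,dv$ requires $\rho_f>0$, which the ansatz space guarantees. I expect no real obstacle beyond noting that the constant is independent of $M$, since the moments enter linearly. Combined with Corollary~\ref{cor:solution_convergence}, the estimate shows these moments converge in $L^1$ as $\mathcal{L}_{w\text{-PINN}}\to0$.
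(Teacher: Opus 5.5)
Your proof is correct and follows essentially the same route as the paper. Both write each quantity as a moment of $f-\tilde f$ against $1$, $v$, $|v|^2$, insert $w/w$, and apply Cauchy--Schwarz using the first condition in \eqref{eq:weight_cond}; the paper applies it jointly over $[0,1]^3\times\mathbb{R}^3$ for each invariant separately, while you bundle the three weights into $\tfrac32(1+|v|^2)$ and apply it first in $v$ and then in $x$, which is equivalent. Your explicit check that $E_f=\int|v|^2f\,dv$ and your constant $\tfrac32 C_w^{1/2}$ are slightly more careful than the paper, which leaves the energy identity implicit and states $C=C_w^{1/2}$ without accounting for the sum of three terms.
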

\begin{proof}
We denote the macroscopic moments generically by integrating against the collision invariants $\phi(v) \in \{1, v, |v|^2\}$. For any such $\phi(v)$, the difference in the macroscopic quantity can be bounded using the Cauchy-Schwarz inequality over the phase space $[0,1]^3 \times \mathbb{R}^3$:
\begin{align*}
    \int_{[0,1]^3} \left| \int_{\mathbb{R}^3} (f-\tilde{f}) \phi(v) dv \right| dx 
    &\leq \int_{[0,1]^3} \int_{\mathbb{R}^3} |w(v)(f-\tilde{f})| \frac{|\phi(v)|}{w(v)} dv dx \\
    &\leq \left( \int_{[0,1]^3} \int_{\mathbb{R}^3} |w(v)(f-\tilde{f})|^2 dv dx \right)^{1/2} \left( \int_{[0,1]^3} \int_{\mathbb{R}^3} \frac{|\phi(v)|^2}{w(v)^2} dv dx \right)^{1/2} \\
    &= \| w(f-\tilde{f})(t) \|_2 \cdot \left( \big|[0,1]^3\big| \int_{\mathbb{R}^3} \frac{|\phi(v)|^2}{w(v)^2} dv \right)^{1/2}.
\end{align*}
Since $|\phi(v)|^2 \leq  (1+|v|^2)^2$ for all $v\in\mathbb{R}^3$, the integral $\int_{\mathbb{R}^3} \frac{|\phi(v)|^2}{w(v)^2} dv$ is strictly finite due to the first integrability condition in \eqref{eq:weight_cond}. Letting this finite value be bounded by a constant $C_w$, setting $C = C_w^{1/2}$ yields the desired $L^1$ bound for the mass, momentum, and energy differences.
\end{proof}

\section{Numerical Experiments}
\label{sec:numerical_experiments}

In this section, we numerically validate the efficacy of the theory-guided weighted PINN loss, denoted as $\mathcal{L}_{w\text{-PINN}}$, derived in Section 3. Specifically, we aim to verify the extent to which the proposed weighting scheme improves the prediction accuracy of the velocity distribution $f$ and, consequently, the macroscopic moments ($\rho, u, T$). The evaluation is based on the relative $L^1$ and $L^2$ errors, which were considered in Corollary \ref{cor:solution_convergence} and \ref{cor:macro_convergence}.

To demonstrate the capabilities of our proposed method, the subsequent simulations cover a range of benchmark tests: a problem with smooth initial data and a Riemann problem for each spatial dimension $d \in \{1, 2, 3\}$, with the microscopic velocity dimension consistently fixed at 3. To precisely assess the accuracy of the neural network predictions, we utilize the reference solutions which were computed employing a high-order conservative semi-Lagrangian scheme \cite{cho2021conservative2}. 

\subsection{Experimental Setup}
\label{subsec:experimental_setup}

To represent the approximate solution, we adopt the Separable Physics-Informed Neural Networks (SPINNs) architecture, which has demonstrated remarkable computational efficiency in high-dimensional spaces \cite{cho2023separable}. Specifically, our baseline neural network architecture and optimization framework follow the existing work \cite{oh2025separable} wherein the authors adapted the SPINN framework for solving the BGK model. For a comprehensive description of the architectural details, we refer the reader to \cite{oh2025separable}.

All numerical experiments were implemented using the JAX library and on a single NVIDIA RTX A6000 GPU. The network was trained using the Lion optimizer \cite{chen2023symbolic} with an initial learning rate of $10^{-5}$ and a cosine decay schedule over 100,000 iterations. Macroscopic moments were evaluated using the trapezoidal rule with 257 uniform grid points per velocity axis.

For the training of the neural network, we employ the theory-guided weighted loss function $\mathcal{L}_{w\text{-PINN}}$ proposed in \eqref{eq:w_pinn_loss_full}. In particular, we focus on the polynomial weight function of the form $w(v) = 1 + \alpha |v|^\beta$. The specific hyperparameter selection is considered through an ablation study in the following subsection.

Additionally, to ensure a clear performance evaluation, we compare our training results against two baseline loss functions. The first is the standard $L^2$ loss function. The second is the relative loss function proposed in the existing work \cite{oh2025separable}, which applies a weight of approximately $1/(|f_\theta|+\epsilon)$ to the residual. Following the original work \cite{oh2025separable}, we set $\epsilon = 10^{-3}$. By comparing our proposed loss against both the standard and empirical baselines, we can independently isolate and investigate the impact of the loss function design.

\subsection{1D Smooth Problem and Hyperparameter Selection}
\label{subsec:ablation_study}

For the polynomial weight function of the form $w(v) = 1 + \alpha |v|^\beta$, Theorem \ref{thm:main_stability} guarantees that any combination of $\alpha > 0$ and $\beta > 7/2$ ensures the accuracy of the approximate solution, provided that the loss function is sufficiently minimized. However, in practical implementations, the aforementioned theoretical stability estimate implicitly assumes successful optimization, which is often difficult to achieve in practice due to the strong sensitivity of the training process to the choice of hyperparameters. In this regard, we investigate appropriate scaling factors $\alpha$ and polynomial growth rates $\beta$ that yield reliable numerical performance, by conducting an extensive ablation study based on a one-dimensional smooth problem.

Specifically, the one-dimensional smooth problem is given on the spatial domain $x \in (-0.5, 0.5)$ with periodic boundary conditions. Here, the initial condition is given by a Maxwellian distribution characterized by the following macroscopic moments:
\begin{equation}
    \rho_0(x) = 1 + 0.5 \sin(2\pi x), \quad u_0(x) = 0, \quad T_0(x) = 1 + 0.5 \sin(2\pi x + 0.2).
\end{equation}
The simulation is performed over the time interval $t \in (0, 0.1]$, and for the numerical implementation, the microscopic velocity domain is truncated to the computational domain $v \in [-10, 10]^3$. At each training iteration, we independently sample $N_t=12$, $N_x=16$, and $N_{v_x}=N_{v_y}=N_{v_z}=12$ collocation points along each respective coordinate axis. 

We evaluated various combinations of $\alpha$ and $\beta$ across different Knudsen numbers at $\text{Kn} = 0.01, 0.1$, and $1.0$. First, in Figure \ref{fig:ablation_curves_kn001}, we show the relative error curves with respect to the varying polynomial growth rate $\beta$ at $\text{Kn} = 0.01$. Table \ref{tab:ablation_kn001} shows the specific error values for the baselines and the cases fixed at $\alpha = 0.1$.

\begin{figure}[h]
    \centering
    \includegraphics[width=\textwidth]{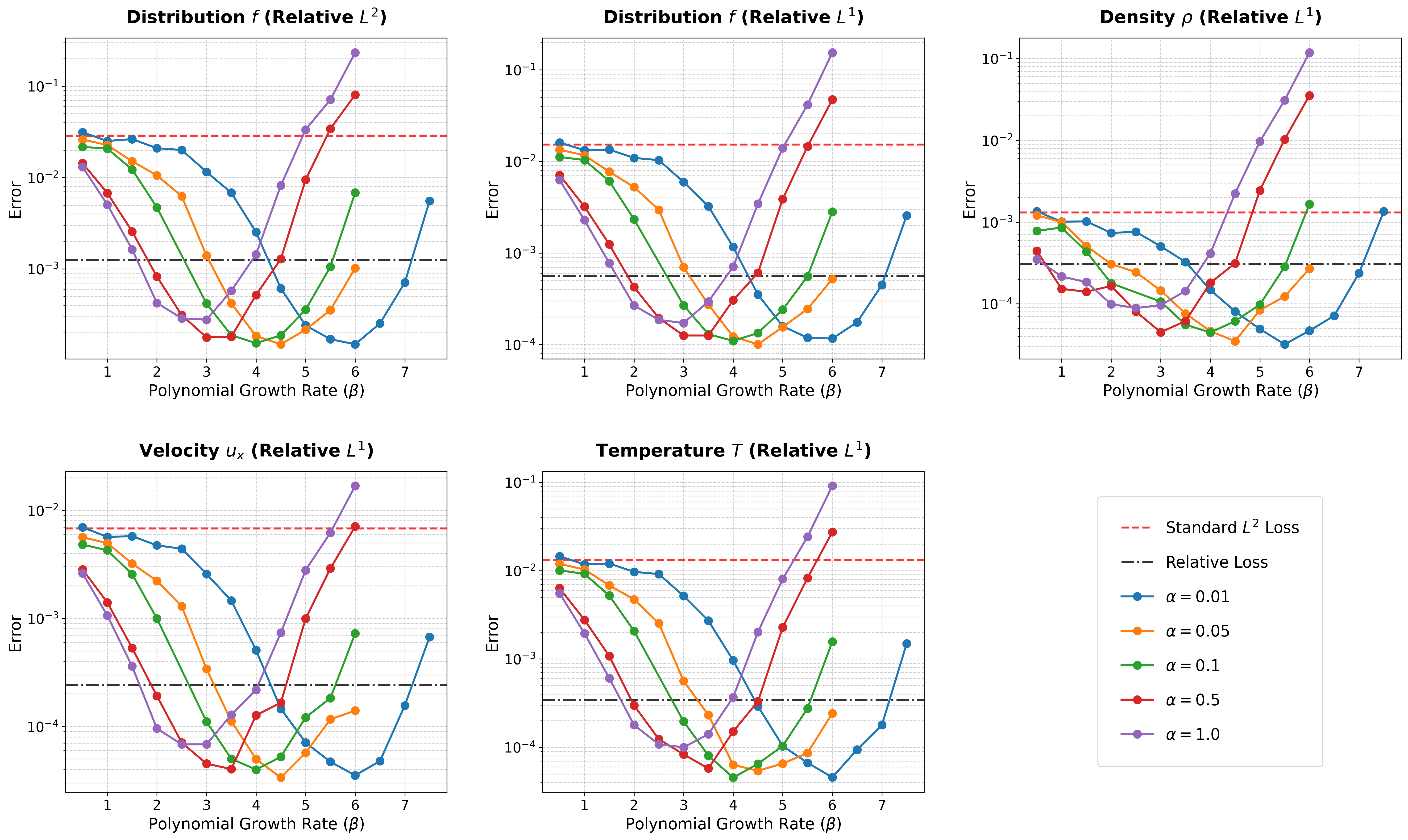}
    \caption{Relative error curves of the distribution function $f$ and macroscopic moments ($\rho, u_x, T$) according to the polynomial growth rate $\beta$ at $\text{Kn} = 0.01$.}
    \label{fig:ablation_curves_kn001}
\end{figure}

\begin{table}[h]
\centering
\caption{Ablation study results of relative errors at $\text{Kn} = 0.01$ for the standard $L^2$ Loss, relative loss \cite{oh2025separable}, and the proposed loss with specific choices of $\alpha$ and $\beta$.}
\label{tab:ablation_kn001}
\resizebox{\textwidth}{!}{
\begin{tabular}{lccccc}
\toprule
\textbf{Loss function} & $\boldsymbol{f}$ \textbf{(Relative $L^2$)} & $\boldsymbol{f}$ \textbf{(Relative $L^1$)} & $\boldsymbol{\rho}$ \textbf{(Relative $L^1$)} & $\boldsymbol{u_x}$ \textbf{(Relative $L^1$)} & $\boldsymbol{T}$ \textbf{(Relative $L^1$)} \\
\midrule
Standard $L^2$ Loss      & $2.87 \times 10^{-2}$ & $1.53 \times 10^{-2}$ & $1.31 \times 10^{-3}$ & $6.78 \times 10^{-3}$ & $1.33 \times 10^{-2}$ \\
Relative Loss \cite{oh2025separable}  & $1.25 \times 10^{-3}$ & $5.63 \times 10^{-4}$ & $3.09 \times 10^{-4}$ & $2.42 \times 10^{-4}$ & $3.44 \times 10^{-4}$ \\
\midrule
$\alpha=0.1, \beta=2.0$ & $4.71 \times 10^{-3}$ & $2.33 \times 10^{-3}$ & $1.78 \times 10^{-4}$ & $9.97 \times 10^{-4}$ & $2.08 \times 10^{-3}$ \\
$\alpha=0.1, \beta=3.0$ & $4.17 \times 10^{-4}$ & $2.68 \times 10^{-4}$ & $1.07 \times 10^{-4}$ & $1.11 \times 10^{-4}$ & $1.96 \times 10^{-4}$ \\
$\alpha=0.1, \beta=3.5$ & $1.88 \times 10^{-4}$ & $1.30 \times 10^{-4}$ & $5.55 \times 10^{-5}$ & $5.02 \times 10^{-5}$ & $8.06 \times 10^{-5}$ \\
\textbf{$\alpha=0.1, \beta=4.0$} & $\mathbf{1.54 \times 10^{-4}}$ & $\mathbf{1.10 \times 10^{-4}}$ & $\mathbf{4.46 \times 10^{-5}}$ & $\mathbf{3.98 \times 10^{-5}}$ & $\mathbf{4.54 \times 10^{-5}}$ \\
$\alpha=0.1, \beta=5.0$ & $3.59 \times 10^{-4}$ & $2.41 \times 10^{-4}$ & $9.76 \times 10^{-5}$ & $1.21 \times 10^{-4}$ & $1.04 \times 10^{-4}$ \\
$\alpha=0.1, \beta=6.0$ & $6.83 \times 10^{-3}$ & $2.82 \times 10^{-3}$ & $1.65 \times 10^{-3}$ & $7.25 \times 10^{-4}$ & $1.57 \times 10^{-3}$ \\
\bottomrule
\end{tabular}
}
\end{table}

It is notable that for each $\alpha$, the error curves clearly exhibit a U-shaped pattern, where the error decreases to a certain point and then increases. When the polynomial growth rate $\beta$ is small, the weight function may fail to sufficiently control the PDE residual in the high-velocity tail. Consequently, the accuracy can degrade. Conversely, when $\beta$ becomes excessively large, the errors increases sharply, even surpassing those of the unweighted standard PINN. This performance degradation does not contradict our theoretical results. Theorem \ref{thm:main_stability} means that the smallness of errors is guaranteed provided that the loss function is sufficiently minimized. Such an excessive weight assigned to the tail regions can introduce extreme bias in the gradients during neural network optimization, thereby resulting in increased loss and training instability. We also note that decreasing $\alpha$
necessitates larger values of 
$\beta$ to achieve the same level of accuracy. In Figure \ref{fig:smooth_comparison_kn001}, we compare the numerical solutions obtained from the choice of $\alpha=0.1$ and $\beta=4.0$.
\begin{figure}[h]
    \centering
    \includegraphics[width=\textwidth]{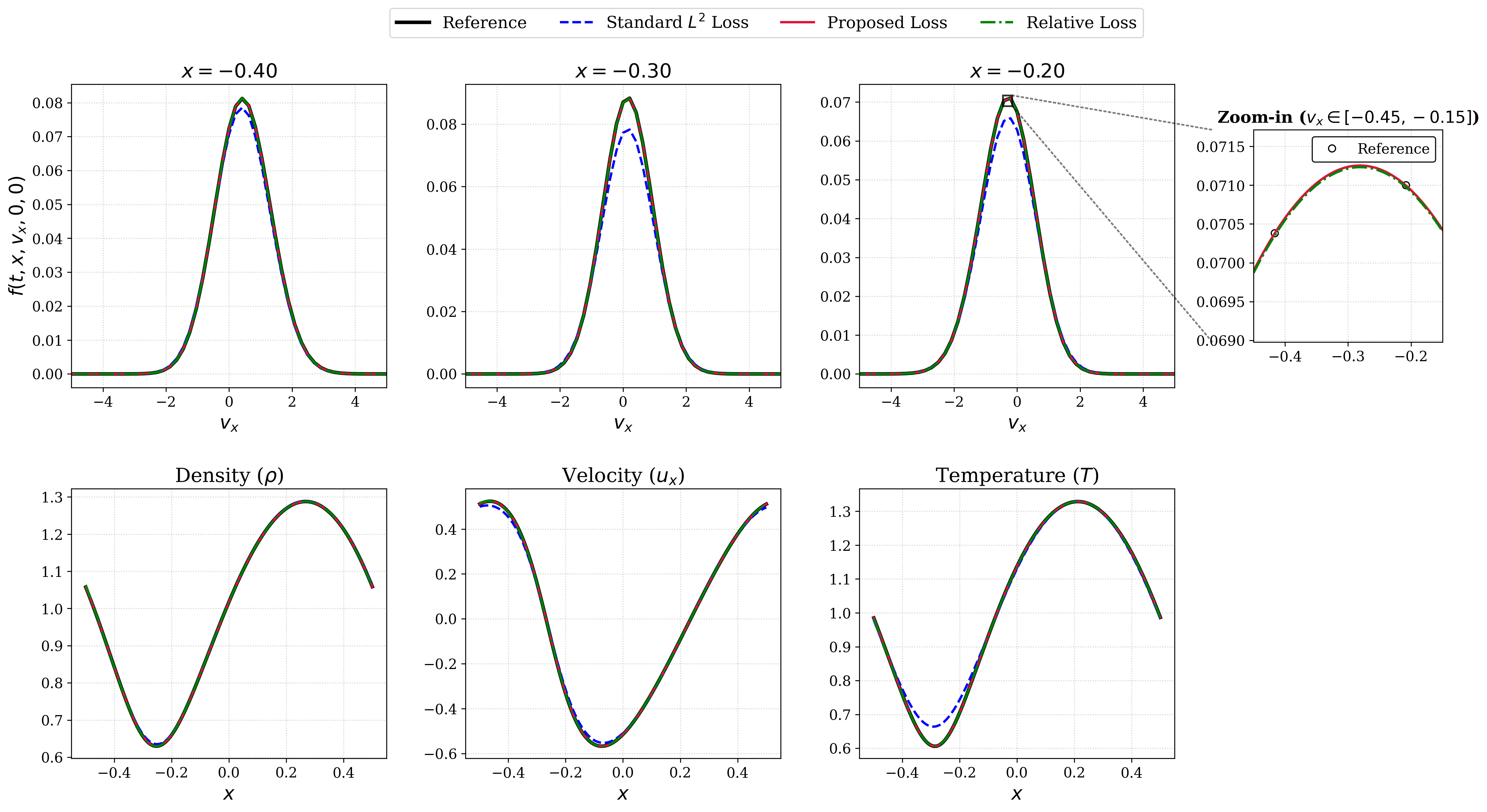}
    \caption{Distribution function $f$ (top row) and macroscopic moments ($\rho, u_x, T$) (bottom row) at $t=0.1$ for the 1D Smooth problem at $\text{Kn}=0.01$.}
    \label{fig:smooth_comparison_kn001}
\end{figure}

In a similar way, Figure \ref{fig:ablation_curves_kn01} and Table \ref{tab:ablation_kn01} present the corresponding ablation results at $\text{Kn} = 0.1$.

\begin{figure}[h]
    \centering
    \includegraphics[width=\textwidth]{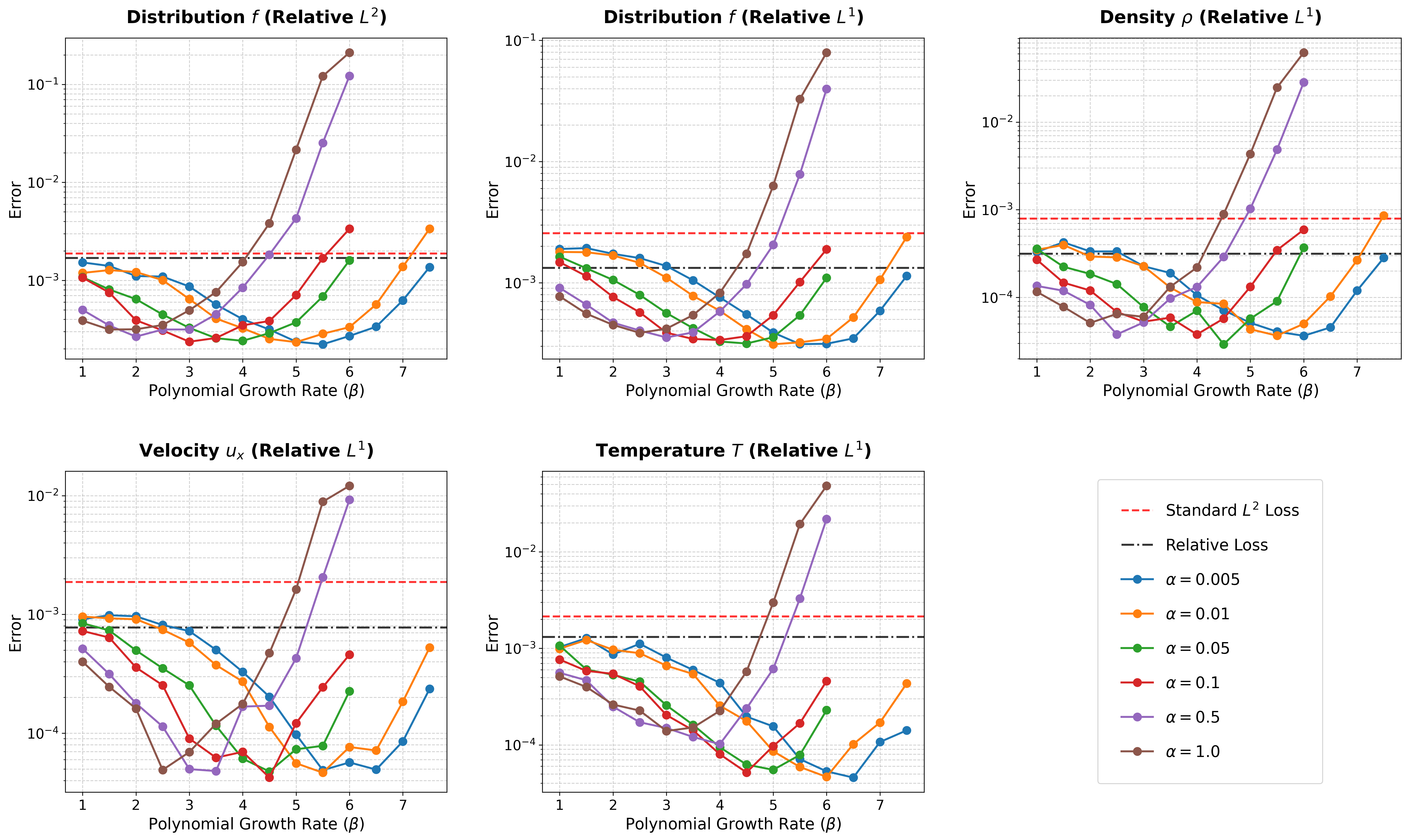}
    \caption{Relative error curves of the distribution function $f$ and macroscopic moments ($\rho, u_x, T$) according to the polynomial growth rate $\beta$ at $\text{Kn} = 0.1$.}
    \label{fig:ablation_curves_kn01}
\end{figure}

\begin{table}[h]
\centering
\caption{Ablation study results of relative errors at $\text{Kn} = 0.1$ for the standard $L^2$ Loss, relative loss \cite{oh2025separable}, and the proposed loss with specific choices of $\alpha$ and $\beta$.}
\label{tab:ablation_kn01}
\resizebox{\textwidth}{!}{
\begin{tabular}{lccccc}
\toprule
\textbf{Loss function} & $\boldsymbol{f}$ \textbf{(Relative $L^2$)} & $\boldsymbol{f}$ \textbf{(Relative $L^1$)} & $\boldsymbol{\rho}$ \textbf{(Relative $L^1$)} & $\boldsymbol{u_x}$ \textbf{(Relative $L^1$)} & $\boldsymbol{T}$ \textbf{(Relative $L^1$)} \\
\midrule
Standard $L^2$ Loss      & $1.88 \times 10^{-3}$ & $2.57 \times 10^{-3}$ & $7.92 \times 10^{-4}$ & $1.88 \times 10^{-3}$ & $2.14 \times 10^{-3}$ \\
Relative Loss \cite{oh2025separable}  & $1.69 \times 10^{-3}$ & $1.33 \times 10^{-3}$ & $3.16 \times 10^{-4}$ & $7.75 \times 10^{-4}$ & $1.32 \times 10^{-3}$ \\
\midrule
$\alpha=0.01, \beta=4.0$ & $3.25 \times 10^{-4}$ & $5.87 \times 10^{-4}$ & $8.85 \times 10^{-5}$ & $2.72 \times 10^{-4}$ & $2.55 \times 10^{-4}$ \\
$\alpha=0.01, \beta=4.5$ & $2.53 \times 10^{-4}$ & $4.14 \times 10^{-4}$ & $8.45 \times 10^{-5}$ & $1.13 \times 10^{-4}$ & $1.76 \times 10^{-4}$ \\
$\alpha=0.01, \beta=5.0$ & $\mathbf{2.34 \times 10^{-4}}$ & $\mathbf{3.10 \times 10^{-4}}$ & $4.34 \times 10^{-5}$ & $5.58 \times 10^{-5}$ & $8.58 \times 10^{-5}$ \\
$\alpha=0.01, \beta=5.5$ & $2.85 \times 10^{-4}$ & $3.22 \times 10^{-4}$ & $3.67 \times 10^{-5}$ & $4.65 \times 10^{-5}$ & $5.89 \times 10^{-5}$ \\
$\alpha=0.01, \beta=6.0$ & $3.33 \times 10^{-4}$ & $3.44 \times 10^{-4}$ & $5.00 \times 10^{-5}$ & $7.65 \times 10^{-5}$ & $\mathbf{4.66 \times 10^{-5}}$ \\
\midrule
$\alpha=0.05, \beta=3.0$ & $3.28 \times 10^{-4}$ & $5.62 \times 10^{-4}$ & $7.76 \times 10^{-5}$ & $2.53 \times 10^{-4}$ & $2.57 \times 10^{-4}$ \\
$\alpha=0.05, \beta=3.5$ & $2.57 \times 10^{-4}$ & $4.21 \times 10^{-4}$ & $4.65 \times 10^{-5}$ & $1.16 \times 10^{-4}$ & $1.61 \times 10^{-4}$ \\
$\alpha=0.05, \beta=4.0$ & $2.42 \times 10^{-4}$ & $3.27 \times 10^{-4}$ & $7.07 \times 10^{-5}$ & $6.10 \times 10^{-5}$ & $9.45 \times 10^{-5}$ \\
$\alpha=0.05, \beta=4.5$ & $2.90 \times 10^{-4}$ & $3.15 \times 10^{-4}$ & $\mathbf{2.92 \times 10^{-5}}$ & $4.76 \times 10^{-5}$ & $6.25 \times 10^{-5}$ \\
$\alpha=0.05, \beta=5.0$ & $3.73 \times 10^{-4}$ & $3.55 \times 10^{-4}$ & $5.72 \times 10^{-5}$ & $7.32 \times 10^{-5}$ & $5.51 \times 10^{-5}$ \\
\midrule
$\alpha=0.1, \beta=3.0$  & $2.37 \times 10^{-4}$ & $3.87 \times 10^{-4}$ & $5.29 \times 10^{-5}$ & $9.02 \times 10^{-5}$ & $2.04 \times 10^{-4}$ \\
$\alpha=0.1, \beta=3.5$  & $2.58 \times 10^{-4}$ & $3.43 \times 10^{-4}$ & $5.88 \times 10^{-5}$ & $6.22 \times 10^{-5}$ & $1.42 \times 10^{-4}$ \\
$\alpha=0.1, \beta=4.0$  & $3.48 \times 10^{-4}$ & $3.37 \times 10^{-4}$ & $3.79 \times 10^{-5}$ & $6.97 \times 10^{-5}$ & $7.99 \times 10^{-5}$ \\
$\alpha=0.1, \beta=4.5$  & $3.85 \times 10^{-4}$ & $3.62 \times 10^{-4}$ & $5.74 \times 10^{-5}$ & $\mathbf{4.23 \times 10^{-5}}$ & $5.17 \times 10^{-5}$ \\
$\alpha=0.1, \beta=5.0$  & $7.09 \times 10^{-4}$ & $5.39 \times 10^{-4}$ & $1.32 \times 10^{-4}$ & $1.21 \times 10^{-4}$ & $9.72 \times 10^{-5}$ \\
\bottomrule
\end{tabular}
}
\end{table}

As depicted in Figure \ref{fig:ablation_curves_kn01}, the error curves maintain a distinct U-shape across the evaluated combinations. The proposed polynomial weight consistently outperforms both the standard $L^2$ and the relative loss baselines.

Finally, Figure \ref{fig:ablation_curves_kn1} and Table \ref{tab:ablation_kn1} present the corresponding ablation results for the highly rarefied regime at $\text{Kn} = 1.0$.

\begin{figure}[htbp]
    \centering
    \includegraphics[width=\textwidth]{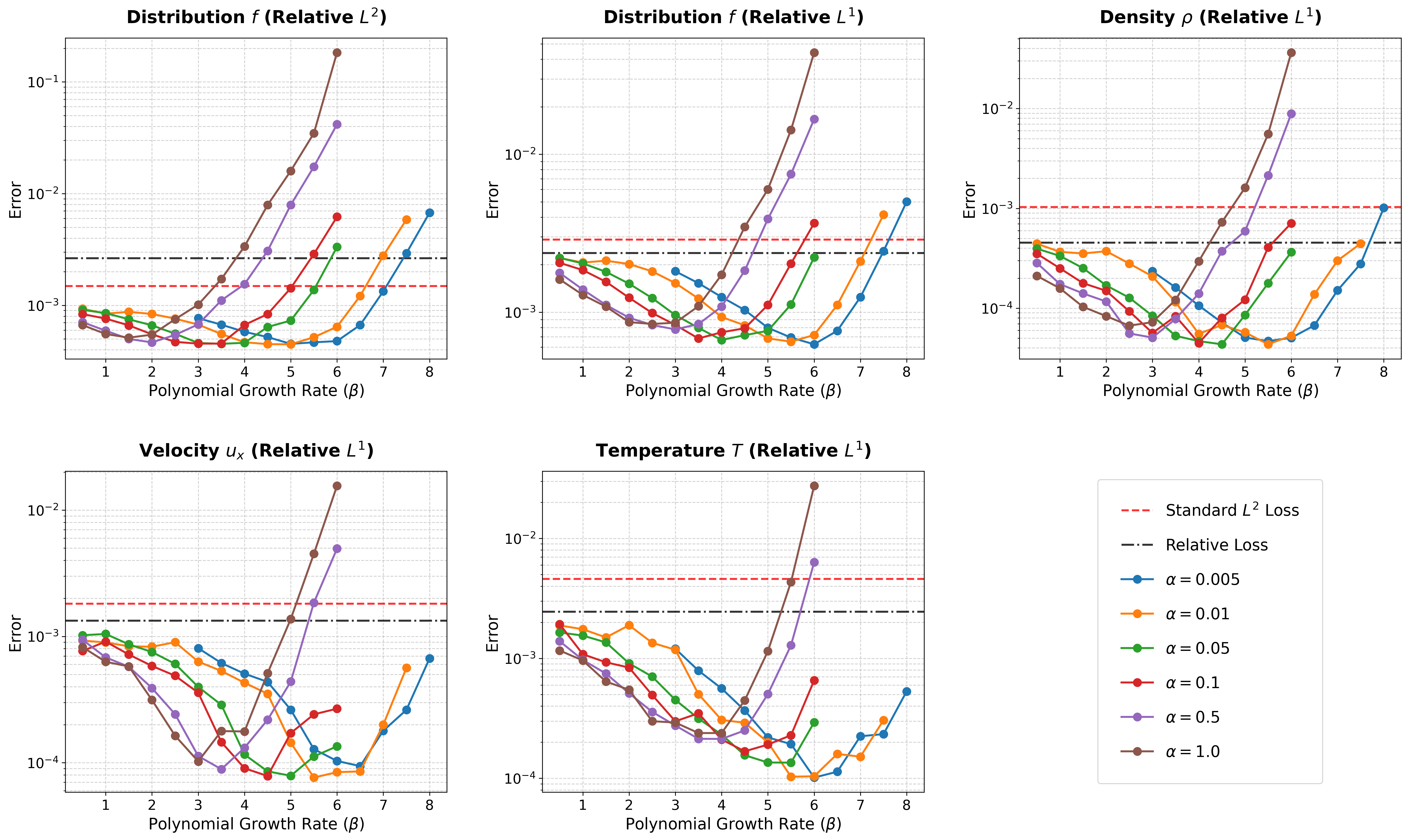}
    \caption{Relative error curves of the distribution function $f$ and macroscopic moments ($\rho, u_x, T$) according to the polynomial growth rate $\beta$ at $\text{Kn} = 1.0$.}
    \label{fig:ablation_curves_kn1}
\end{figure}

\begin{table}[htbp]
\centering
\caption{Ablation study results of relative errors at $\text{Kn} = 1.0$ for the standard $L^2$ Loss, relative loss \cite{oh2025separable}, and the proposed loss with specific choices of $\alpha$ and $\beta$.}
\label{tab:ablation_kn1}
\resizebox{\textwidth}{!}{
\begin{tabular}{lccccc}
\toprule
\textbf{Loss function} & $\boldsymbol{f}$ \textbf{(Relative $L^2$)} & $\boldsymbol{f}$ \textbf{(Relative $L^1$)} & $\boldsymbol{\rho}$ \textbf{(Relative $L^1$)} & $\boldsymbol{u_x}$ \textbf{(Relative $L^1$)} & $\boldsymbol{T}$ \textbf{(Relative $L^1$)} \\
\midrule
Standard $L^2$ Loss       & $1.49 \times 10^{-3}$ & $2.89 \times 10^{-3}$ & $1.03 \times 10^{-3}$ & $1.82 \times 10^{-3}$ & $4.59 \times 10^{-3}$ \\
Relative Loss \cite{oh2025separable}  & $2.64 \times 10^{-3}$ & $2.37 \times 10^{-3}$ & $4.54 \times 10^{-4}$ & $1.33 \times 10^{-3}$ & $2.45 \times 10^{-3}$ \\
\midrule
$\alpha=0.01, \beta=5.0$ & $\mathbf{4.48 \times 10^{-4}}$ & $6.82 \times 10^{-4}$ & $5.73 \times 10^{-5}$ & $1.44 \times 10^{-4}$ & $1.99 \times 10^{-4}$ \\
$\alpha=0.01, \beta=5.5$ & $5.18 \times 10^{-4}$ & $\mathbf{6.51 \times 10^{-4}}$ & $4.35 \times 10^{-5}$ & $\mathbf{7.62 \times 10^{-5}}$ & $\mathbf{1.03 \times 10^{-4}}$ \\
$\alpha=0.01, \beta=6.0$ & $6.43 \times 10^{-4}$ & $7.17 \times 10^{-4}$ & $5.30 \times 10^{-5}$ & $8.41 \times 10^{-5}$ & $1.04 \times 10^{-4}$ \\
$\alpha=0.01, \beta=6.5$ & $1.22 \times 10^{-3}$ & $1.11 \times 10^{-3}$ & $1.38 \times 10^{-4}$ & $8.53 \times 10^{-5}$ & $1.60 \times 10^{-4}$ \\
$\alpha=0.01, \beta=7.0$ & $2.78 \times 10^{-3}$ & $2.10 \times 10^{-3}$ & $3.00 \times 10^{-4}$ & $2.00 \times 10^{-4}$ & $1.51 \times 10^{-4}$ \\
\midrule
$\alpha=0.05, \beta=3.5$ & $4.53 \times 10^{-4}$ & $7.96 \times 10^{-4}$ & $5.29 \times 10^{-5}$ & $2.87 \times 10^{-4}$ & $3.15 \times 10^{-4}$ \\
$\alpha=0.05, \beta=4.0$ & $4.62 \times 10^{-4}$ & $6.67 \times 10^{-4}$ & $4.68 \times 10^{-5}$ & $1.16 \times 10^{-4}$ & $2.31 \times 10^{-4}$ \\
$\alpha=0.05, \beta=4.5$ & $6.40 \times 10^{-4}$ & $7.15 \times 10^{-4}$ & $\mathbf{4.35 \times 10^{-5}}$ & $8.54 \times 10^{-5}$ & $1.56 \times 10^{-4}$ \\
$\alpha=0.05, \beta=5.0$ & $7.31 \times 10^{-4}$ & $7.63 \times 10^{-4}$ & $8.54 \times 10^{-5}$ & $7.88 \times 10^{-5}$ & $1.35 \times 10^{-4}$ \\
$\alpha=0.05, \beta=5.5$ & $1.37 \times 10^{-3}$ & $1.12 \times 10^{-3}$ & $1.78 \times 10^{-4}$ & $1.12 \times 10^{-4}$ & $1.35 \times 10^{-4}$ \\
\midrule
$\alpha=0.1, \beta=3.5$  & $4.55 \times 10^{-4}$ & $6.82 \times 10^{-4}$ & $8.31 \times 10^{-5}$ & $1.46 \times 10^{-4}$ & $3.48 \times 10^{-4}$ \\
$\alpha=0.1, \beta=4.0$  & $6.68 \times 10^{-4}$ & $7.46 \times 10^{-4}$ & $4.48 \times 10^{-5}$ & $9.03 \times 10^{-5}$ & $2.11 \times 10^{-4}$ \\
$\alpha=0.1, \beta=4.5$  & $8.35 \times 10^{-4}$ & $7.91 \times 10^{-4}$ & $8.00 \times 10^{-5}$ & $7.84 \times 10^{-5}$ & $1.68 \times 10^{-4}$ \\
$\alpha=0.1, \beta=5.0$  & $1.43 \times 10^{-3}$ & $1.11 \times 10^{-3}$ & $1.22 \times 10^{-4}$ & $1.72 \times 10^{-4}$ & $1.91 \times 10^{-4}$ \\
\bottomrule
\end{tabular}
}
\end{table}

At $\text{Kn} = 1.0$, the standard $L^2$ loss inherently exhibits reasonably good performance. Nevertheless, as shown in Figure \ref{fig:ablation_curves_kn1}, the proposed method generally yields even better results across most combinations of $\alpha$ and $\beta$.

Although the specific choice of $\alpha$ and $\beta$ affects the overall performance, it is evident that most choices of $\alpha$ and $\beta$ generally yield better results than the standard $L^2$ and relative loss baselines. Based on these observations across various flow configurations, we adopt $(\alpha, \beta) = (0.1, 4.0)$ as the fixed hyperparameter setting for all subsequent numerical experiments.

\subsection{One-Dimensional Riemann Problem}
\label{subsec:1d_riemann}

To evaluate the capability of the proposed weighted loss function in capturing sharp gradients and discontinuities, we simulate a one-dimensional Riemann problem. This benchmark, typically representing a shock tube scenario, is characterized by piecewise constant initial macroscopic states that generate complex wave structures, including shock waves, contact discontinuities, and rarefaction waves.

Specifically, it is given on the spatial domain $x \in (-0.5, 0.5)$ with homogeneous Neumann boundary conditions applied at $x = \pm 0.5$. To facilitate the neural network training, the initial discontinuity is slightly smoothed using a hyperbolic tangent function. The initial condition is given by a Maxwellian distribution constructed from the following macroscopic states:
\begin{equation}
    \rho_0(x) = 1 - 0.875 h(x), \quad u_0(x) = 0, \quad T_0(x) = 1 - 0.2 h(x),
\end{equation}
where $h(x) = \frac{1}{2}(1 + \tanh(100x))$ serves as a smoothed Heaviside step function. This effectively defines the left state ($x < 0$) as $(\rho_L, u_L, T_L) \approx (1.0, 0, 1.0)$ and the right state ($x > 0$) as $(\rho_R, u_R, T_R) \approx (0.125, 0, 0.8)$. 

The simulation is performed over the time interval $t \in (0, 0.1]$, and for the numerical implementation, the microscopic velocity space is truncated to the computational domain $v \in [-10, 10]^3$. At each training iteration, we independently sample $N_t=12$, $N_x=32$, $N_{v_x}=32$, and $N_{v_y}=N_{v_z}=12$ collocation points along each respective coordinate axis. Table \ref{tab:riemann_errors} summarizes the relative $L^2$ and $L^1$ errors for the distribution function $f$ and the macroscopic moments across three distinct Knudsen numbers: $\text{Kn} = 1.0, 0.1,$ and $0.01$.
\begin{table}[h]
\centering
\caption{Relative errors of the distribution function $f$ and macroscopic moments ($\rho, u_x, T$) for the 1D Riemann problem across different Knudsen numbers ($\text{Kn} \in \{1.0, 0.1, 0.01\}$).}
\label{tab:riemann_errors}
\resizebox{\textwidth}{!}{
\begin{tabular}{clccccc}
\toprule
$\text{Kn}$ & \textbf{Loss function} & $\boldsymbol{f}$ \textbf{(Relative $L^2$)} & $\boldsymbol{f}$ \textbf{(Relative $L^1$)} & $\boldsymbol{\rho}$ \textbf{(Relative $L^1$)} & $\boldsymbol{u_x}$ \textbf{(Relative $L^1$)} & $\boldsymbol{T}$ \textbf{(Relative $L^1$)} \\
\midrule
\multirow{3}{*}{$1.0$} 
& Standard $L^2$ Loss        & $1.78 \times 10^{-2}$ & $1.26 \times 10^{-2}$ & $1.79 \times 10^{-3}$ & $6.55 \times 10^{-3}$ & $7.26 \times 10^{-3}$ \\
& Relative Loss \cite{oh2025separable}  & $2.58 \times 10^{-2}$ & $1.37 \times 10^{-2}$ & $2.49 \times 10^{-3}$ & $4.13 \times 10^{-3}$ & $3.36 \times 10^{-3}$ \\
& \textbf{Proposed Loss}     & $\mathbf{1.42 \times 10^{-2}}$ & $\mathbf{9.49 \times 10^{-3}}$ & $\mathbf{8.54 \times 10^{-4}}$ & $\mathbf{1.74 \times 10^{-3}}$ & $\mathbf{1.48 \times 10^{-3}}$ \\
\midrule
\multirow{3}{*}{$0.1$} 
& Standard $L^2$ Loss       & $7.63 \times 10^{-3}$ & $6.39 \times 10^{-3}$ & $1.47 \times 10^{-3}$ & $5.47 \times 10^{-3}$ & $4.76 \times 10^{-3}$ \\
& Relative Loss \cite{oh2025separable}  & $1.10 \times 10^{-2}$ & $6.53 \times 10^{-3}$ & $2.06 \times 10^{-3}$ & $2.97 \times 10^{-3}$ & $2.65 \times 10^{-3}$ \\
& \textbf{Proposed Loss}     & $\mathbf{6.64 \times 10^{-3}}$ & $\mathbf{4.86 \times 10^{-3}}$ & $\mathbf{7.62 \times 10^{-4}}$ & $\mathbf{1.12 \times 10^{-3}}$ & $\mathbf{1.10 \times 10^{-3}}$ \\
\midrule
\multirow{3}{*}{$0.01$} 
& Standard $L^2$ Loss       & $1.57 \times 10^{-2}$ & $1.15 \times 10^{-2}$ & $4.20 \times 10^{-3}$ & $1.10 \times 10^{-2}$ & $2.26 \times 10^{-2}$ \\
& Relative Loss \cite{oh2025separable}  & $1.51 \times 10^{-3}$ & $1.42 \times 10^{-3}$ & $1.19 \times 10^{-3}$ & $\mathbf{2.08 \times 10^{-3}}$ & $\mathbf{1.72 \times 10^{-3}}$ \\
& \textbf{Proposed Loss}     & $\mathbf{1.37 \times 10^{-3}}$ & $\mathbf{1.28 \times 10^{-3}}$ & $\mathbf{8.55 \times 10^{-4}}$ & $2.34 \times 10^{-3}$ & $1.80 \times 10^{-3}$ \\
\bottomrule
\end{tabular}
}
\end{table}

The results demonstrate that the proposed method provides robust and consistent performance across a range of Knudsen numbers. Specifically at $\text{Kn} = 1.0, 0.1$, the proposed weighted loss outperforms both the standard $L^2$ baseline and the empirical relative loss across all evaluated quantities. Figure \ref{fig:riemann_comparison_kn1} displays the distribution function at three different locations and macroscopic variables at $\text{Kn}=1.0$. 
\begin{figure}[h]
    \centering
    \includegraphics[width=\textwidth]{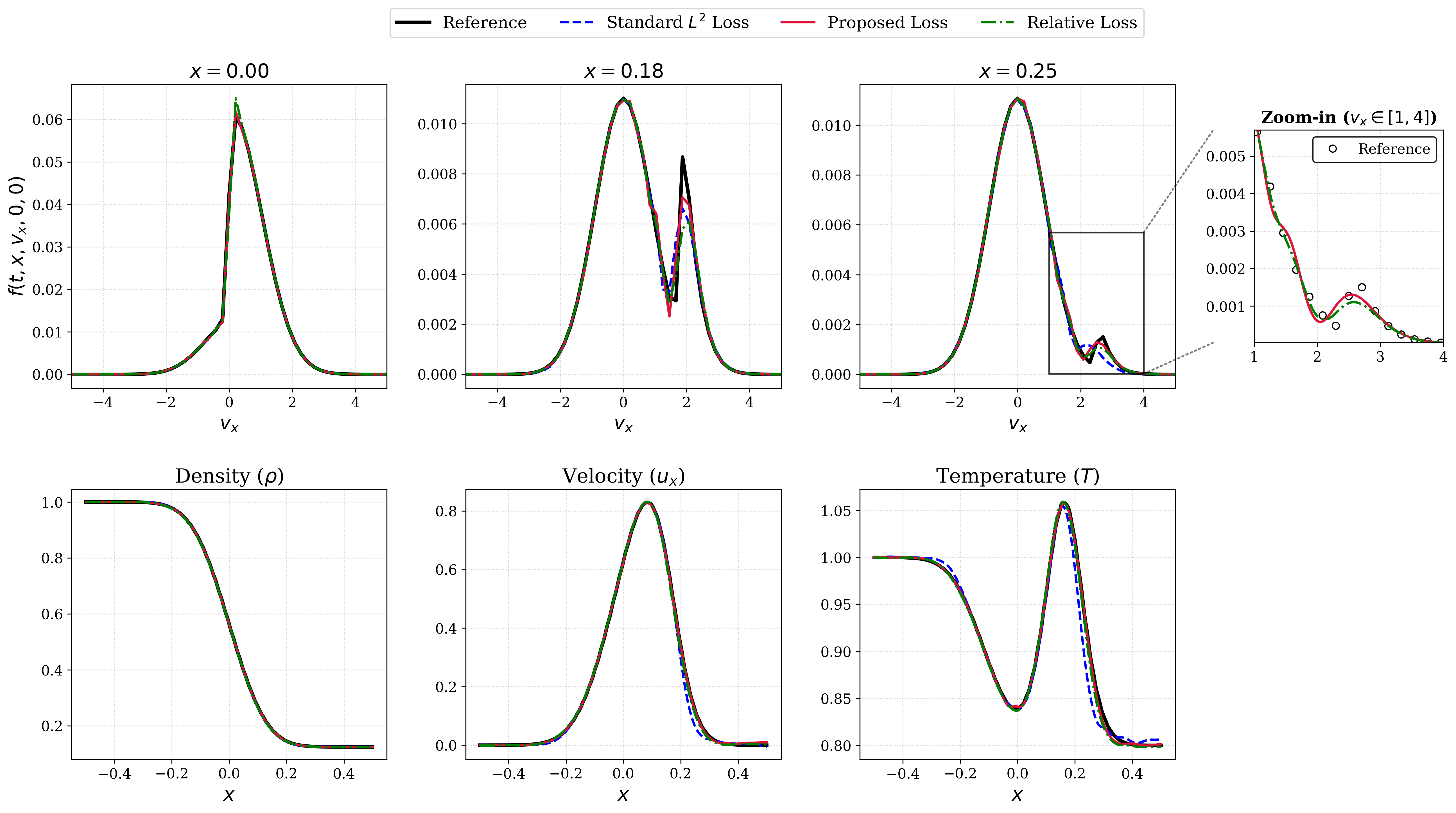}
    \caption{Distribution function $f$ (top row) and macroscopic moments ($\rho, u_x, T$) (bottom row) at $t=0.1$ for the 1D Riemann problem at $\text{Kn}=1.0$.}
    \label{fig:riemann_comparison_kn1}
\end{figure}

\subsection{Two-Dimensional Smooth Problem}
\label{subsec:2d_smooth}

To further validate the performance of the proposed weighted loss across different spatial dimensions, we simulate a two-dimensional smooth problem. This benchmark extends the phase space to five dimensions ($x, y, v_x, v_y, v_z$). The problem is defined on the spatial domain $(x, y) \in (-0.5, 0.5)^2$ with periodic boundary conditions applied across both spatial directions. The initial condition is given by a Maxwellian distribution characterized by the following macroscopic moments:
\begin{equation}
    \rho_0(x, y) = 1 + 0.5 \sin(2\pi x)\sin(2\pi y), \quad u_0(x, y) = \mathbf{0}, \quad T_0(x, y) = 1.
\end{equation}

The simulation is performed over the time interval $t \in (0, 0.1]$, and for the numerical implementation, the microscopic velocity space is truncated to the computational domain $v \in [-10, 10]^3$. At each training iteration, we independently sample $N_t=12$, $N_x=16$, $N_y=16$, and $N_{v_x}=N_{v_y}=N_{v_z}=12$ collocation points along each respective coordinate axis. As established in the 1D experiments, we utilize the fixed hyperparameter configuration of $(\alpha, \beta) = (0.1, 4.0)$.

Table \ref{tab:smooth_2d_errors} summarizes the relative $L^2$ and $L^1$ errors for the macroscopic moments across the 2D spatial domain. The numerical results confirm the better performance of the proposed loss function compared to the baseline models. Across all evaluated Knudsen numbers ($\text{Kn} = 1.0, 0.1, 0.01$), the proposed method outperforms the standard $L^2$ loss in every metric. Furthermore, when compared to the empirical relative loss, the proposed method achieves lower prediction errors in most metrics—particularly exhibiting a clear advantage in predicting the velocity components ($u_x, u_y$) and temperature ($T$). Although the relative loss yields better accuracy for the density $\rho$ at $\text{Kn}=1.0$, the proposed weighting scheme maintains highly robust and superior performance overall. 
\begin{table}[h]
\centering
\caption{Relative errors of the macroscopic moments ($\rho, u_x, u_y, T$) for the 2D smooth problem across different Knudsen numbers ($\text{Kn} \in \{1.0, 0.1, 0.01\}$).}
\label{tab:smooth_2d_errors}
\resizebox{\textwidth}{!}{
\begin{tabular}{cllcccc}
\toprule
$\text{Kn}$ & \textbf{Loss function} & \textbf{Metric} & $\boldsymbol{\rho}$ & $\boldsymbol{u_x}$ & $\boldsymbol{u_y}$ & $\boldsymbol{T}$ \\
\midrule
\multirow{6}{*}{$1.0$} 
& \multirow{2}{*}{Standard $L^2$ Loss} 
& Rel $L^1$ & $7.85 \times 10^{-5}$ & $1.60 \times 10^{-4}$ & $2.20 \times 10^{-4}$ & $2.54 \times 10^{-4}$ \\
& & Rel $L^2$ & $1.01 \times 10^{-4}$ & $2.24 \times 10^{-4}$ & $2.90 \times 10^{-4}$ & $3.18 \times 10^{-4}$ \\
\cmidrule{2-7}
& \multirow{2}{*}{Relative Loss \cite{oh2025separable}} 
& Rel $L^1$ & $\mathbf{2.46 \times 10^{-5}}$ & $2.60 \times 10^{-5}$ & $3.59 \times 10^{-5}$ & $1.82 \times 10^{-5}$ \\
& & Rel $L^2$ & $\mathbf{3.14 \times 10^{-5}}$ & $3.40 \times 10^{-5}$ & $4.53 \times 10^{-5}$ & $2.66 \times 10^{-5}$ \\
\cmidrule{2-7}
& \multirow{2}{*}{\textbf{Proposed Loss}} 
& Rel $L^1$ & $2.61 \times 10^{-5}$ & $\mathbf{1.78 \times 10^{-5}}$ & $\mathbf{2.28 \times 10^{-5}}$ & $\mathbf{1.40 \times 10^{-5}}$ \\
& & Rel $L^2$ & $3.58 \times 10^{-5}$ & $\mathbf{2.34 \times 10^{-5}}$ & $\mathbf{3.18 \times 10^{-5}}$ & $\mathbf{1.82 \times 10^{-5}}$ \\
\midrule
\multirow{6}{*}{$0.1$} 
& \multirow{2}{*}{Standard $L^2$ Loss} 
& Rel $L^1$ & $8.78 \times 10^{-5}$ & $1.84 \times 10^{-4}$ & $1.68 \times 10^{-4}$ & $1.43 \times 10^{-4}$ \\
& & Rel $L^2$ & $1.15 \times 10^{-4}$ & $2.53 \times 10^{-4}$ & $2.31 \times 10^{-4}$ & $2.12 \times 10^{-4}$ \\
\cmidrule{2-7}
& \multirow{2}{*}{Relative Loss \cite{oh2025separable}} 
& Rel $L^1$ & $3.02 \times 10^{-5}$ & $3.16 \times 10^{-5}$ & $2.67 \times 10^{-5}$ & $3.47 \times 10^{-5}$ \\
& & Rel $L^2$ & $3.86 \times 10^{-5}$ & $4.25 \times 10^{-5}$ & $3.76 \times 10^{-5}$ & $4.63 \times 10^{-5}$ \\
\cmidrule{2-7}
& \multirow{2}{*}{\textbf{Proposed Loss}} 
& Rel $L^1$ & $\mathbf{2.58 \times 10^{-5}}$ & $\mathbf{2.19 \times 10^{-5}}$ & $\mathbf{2.08 \times 10^{-5}}$ & $\mathbf{2.33 \times 10^{-5}}$ \\
& & Rel $L^2$ & $\mathbf{3.49 \times 10^{-5}}$ & $\mathbf{2.95 \times 10^{-5}}$ & $\mathbf{2.73 \times 10^{-5}}$ & $\mathbf{3.78 \times 10^{-5}}$ \\
\midrule
\multirow{6}{*}{$0.01$} 
& \multirow{2}{*}{Standard $L^2$ Loss} 
& Rel $L^1$ & $2.35 \times 10^{-4}$ & $2.03 \times 10^{-4}$ & $2.30 \times 10^{-4}$ & $6.81 \times 10^{-4}$ \\
& & Rel $L^2$ & $3.25 \times 10^{-4}$ & $4.07 \times 10^{-4}$ & $4.40 \times 10^{-4}$ & $9.88 \times 10^{-4}$ \\
\cmidrule{2-7}
& \multirow{2}{*}{Relative Loss \cite{oh2025separable}} 
& Rel $L^1$ & $4.62 \times 10^{-5}$ & $3.66 \times 10^{-5}$ & $3.92 \times 10^{-5}$ & $5.08 \times 10^{-5}$ \\
& & Rel $L^2$ & $5.75 \times 10^{-5}$ & $5.38 \times 10^{-5}$ & $5.66 \times 10^{-5}$ & $7.23 \times 10^{-5}$ \\
\cmidrule{2-7}
& \multirow{2}{*}{\textbf{Proposed Loss}} 
& Rel $L^1$ & $\mathbf{2.92 \times 10^{-5}}$ & $\mathbf{2.77 \times 10^{-5}}$ & $\mathbf{3.11 \times 10^{-5}}$ & $\mathbf{2.85 \times 10^{-5}}$ \\
& & Rel $L^2$ & $\mathbf{3.86 \times 10^{-5}}$ & $\mathbf{3.92 \times 10^{-5}}$ & $\mathbf{4.57 \times 10^{-5}}$ & $\mathbf{4.23 \times 10^{-5}}$ \\
\bottomrule
\end{tabular}
}
\end{table}

\subsection{Two-Dimensional Riemann Problem}
\label{subsec:2d_riemann}
We also simulate a two-dimensional Riemann problem. This benchmark extends the 1D shock tube scenario into a 2D spatial domain, generating intricate wave structures such as interacting shock waves and contact discontinuities across multiple axes.

This benchmark is defined on the spatial domain $(x, y) \in (-1.0, 1.0)^2$ with homogeneous Neumann boundary conditions applied at all spatial boundaries ($x = \pm 1.0$ and $y = \pm 1.0$). The initial condition is given by a Maxwellian distribution constructed from piecewise constant macroscopic states separated by a circular discontinuity. To facilitate the neural network training, this initial discontinuity is slightly smoothed using a hyperbolic tangent function:
\begin{equation}
    \rho_0(x, y) = 0.125 + 0.875 h(x, y), \quad u_0(x, y) = \mathbf{0}, \quad T_0(x, y) = 0.8 + 0.2 h(x, y),
\end{equation}
where $h(x, y) = \frac{1}{2}(1 + \tanh(100(0.2 - x^2 - y^2)))$ serves as a smoothed 2D Heaviside step function. This effectively defines a high-density, high-temperature region inside a circle of radius $\sqrt{0.2}$ (where $\rho \approx 1.0, T \approx 1.0$) surrounded by a low-density, low-temperature background state (where $\rho \approx 0.125, T \approx 0.8$).

The simulation is performed over the time interval $t \in (0, 0.1]$, and for the numerical implementation, the microscopic velocity space is truncated to the computational domain $v \in [-8, 8]^3$. At each training iteration, we independently sample $N_t=12$, $N_x=18$, $N_y=18$, $N_{v_x}=18$, $N_{v_y}=18$, and $N_{v_z}=12$ collocation points along each respective coordinate axis. As established in the previous experiments, we utilize the fixed hyperparameter configuration of $(\alpha, \beta) = (0.1, 4.0)$.

Table \ref{tab:riemann_2d_errors} summarizes the relative $L^2$ and $L^1$ errors for the macroscopic moments across the 2D spatial domain. The results demonstrate the exceptional robustness of the proposed weighted loss. Remarkably, across all evaluated Knudsen numbers ($\text{Kn} = 1.0, 0.1, 0.01$) and all macroscopic quantities ($\rho, u_x, u_y, T$), the proposed method strictly achieves the lowest prediction errors. In this stringent test involving multi-dimensional discontinuities, the standard $L^2$ loss and the empirical relative loss suffer from significant performance degradation, particularly in the near-continuum regime ($\text{Kn}=0.01$). In contrast, the proposed theory-guided weight consistently suppresses the errors without requiring any problem-specific hyperparameter tuning, thereby proving its strong generalizability and stabilizing effect.

\begin{table}[h]
\centering
\caption{Relative errors of the macroscopic moments ($\rho, u_x, u_y, T$) for the 2D Riemann problem across different Knudsen numbers ($\text{Kn} \in \{1.0, 0.1, 0.01\}$).}
\label{tab:riemann_2d_errors}
\resizebox{\textwidth}{!}{
\begin{tabular}{cllcccc}
\toprule
$\text{Kn}$ & \textbf{Loss function} & \textbf{Metric} & $\boldsymbol{\rho}$ & $\boldsymbol{u_x}$ & $\boldsymbol{u_y}$ & $\boldsymbol{T}$ \\
\midrule
\multirow{6}{*}{$1.0$} 
& \multirow{2}{*}{Standard $L^2$ Loss} 
& Rel $L^1$ & $8.09 \times 10^{-3}$ & $3.32 \times 10^{-2}$ & $3.75 \times 10^{-2}$ & $1.35 \times 10^{-2}$ \\
& & Rel $L^2$ & $9.38 \times 10^{-3}$ & $3.28 \times 10^{-2}$ & $3.51 \times 10^{-2}$ & $2.32 \times 10^{-2}$ \\
\cmidrule{2-7}
& \multirow{2}{*}{Relative Loss \cite{oh2025separable}} 
& Rel $L^1$ & $1.70 \times 10^{-2}$ & $2.94 \times 10^{-2}$ & $3.03 \times 10^{-2}$ & $8.00 \times 10^{-3}$ \\
& & Rel $L^2$ & $2.18 \times 10^{-2}$ & $2.82 \times 10^{-2}$ & $2.82 \times 10^{-2}$ & $1.19 \times 10^{-2}$ \\
\cmidrule{2-7}
& \multirow{2}{*}{\textbf{Proposed Loss}} 
& Rel $L^1$ & $\mathbf{7.10 \times 10^{-3}}$ & $\mathbf{1.59 \times 10^{-2}}$ & $\mathbf{1.59 \times 10^{-2}}$ & $\mathbf{4.70 \times 10^{-3}}$ \\
& & Rel $L^2$ & $\mathbf{7.75 \times 10^{-3}}$ & $\mathbf{1.13 \times 10^{-2}}$ & $\mathbf{1.11 \times 10^{-2}}$ & $\mathbf{6.53 \times 10^{-3}}$ \\
\midrule
\multirow{6}{*}{$0.1$} 
& \multirow{2}{*}{Standard $L^2$ Loss} 
& Rel $L^1$ & $7.62 \times 10^{-3}$ & $2.63 \times 10^{-2}$ & $3.05 \times 10^{-2}$ & $1.14 \times 10^{-2}$ \\
& & Rel $L^2$ & $8.57 \times 10^{-3}$ & $2.59 \times 10^{-2}$ & $3.31 \times 10^{-2}$ & $1.85 \times 10^{-2}$ \\
\cmidrule{2-7}
& \multirow{2}{*}{Relative Loss \cite{oh2025separable}} 
& Rel $L^1$ & $2.16 \times 10^{-2}$ & $3.35 \times 10^{-2}$ & $3.60 \times 10^{-2}$ & $1.11 \times 10^{-2}$ \\
& & Rel $L^2$ & $2.80 \times 10^{-2}$ & $3.11 \times 10^{-2}$ & $3.46 \times 10^{-2}$ & $1.79 \times 10^{-2}$ \\
\cmidrule{2-7}
& \multirow{2}{*}{\textbf{Proposed Loss}} 
& Rel $L^1$ & $\mathbf{6.09 \times 10^{-3}}$ & $\mathbf{1.69 \times 10^{-2}}$ & $\mathbf{1.77 \times 10^{-2}}$ & $\mathbf{4.79 \times 10^{-3}}$ \\
& & Rel $L^2$ & $\mathbf{6.24 \times 10^{-3}}$ & $\mathbf{1.20 \times 10^{-2}}$ & $\mathbf{1.27 \times 10^{-2}}$ & $\mathbf{6.71 \times 10^{-3}}$ \\
\midrule
\multirow{6}{*}{$0.01$} 
& \multirow{2}{*}{Standard $L^2$ Loss} 
& Rel $L^1$ & $3.36 \times 10^{-2}$ & $6.85 \times 10^{-2}$ & $7.09 \times 10^{-2}$ & $1.99 \times 10^{-1}$ \\
& & Rel $L^2$ & $3.96 \times 10^{-2}$ & $5.18 \times 10^{-2}$ & $5.44 \times 10^{-2}$ & $2.26 \times 10^{-1}$ \\
\cmidrule{2-7}
& \multirow{2}{*}{Relative Loss \cite{oh2025separable}} 
& Rel $L^1$ & $4.04 \times 10^{-2}$ & $4.28 \times 10^{-2}$ & $4.13 \times 10^{-2}$ & $2.72 \times 10^{-2}$ \\
& & Rel $L^2$ & $5.50 \times 10^{-2}$ & $4.32 \times 10^{-2}$ & $4.31 \times 10^{-2}$ & $5.25 \times 10^{-2}$ \\
\cmidrule{2-7}
& \multirow{2}{*}{\textbf{Proposed Loss}} 
& Rel $L^1$ & $\mathbf{1.26 \times 10^{-2}}$ & $\mathbf{2.57 \times 10^{-2}}$ & $\mathbf{2.80 \times 10^{-2}}$ & $\mathbf{1.58 \times 10^{-2}}$ \\
& & Rel $L^2$ & $\mathbf{1.57 \times 10^{-2}}$ & $\mathbf{2.12 \times 10^{-2}}$ & $\mathbf{2.29 \times 10^{-2}}$ & $\mathbf{2.89 \times 10^{-2}}$ \\
\bottomrule
\end{tabular}
}
\end{table}

\subsection{Three-Dimensional Smooth Problem}
\label{subsec:3d_smooth}

To verify the effectiveness of the proposed weighted loss in a full high-dimensional setting, we simulate a three-dimensional smooth problem. This problem is challenging as the phase space extends to six dimensions ($x, y, z, v_x, v_y, v_z$). This benchmark is defined on the spatial domain $(x, y, z) \in (-0.5, 0.5)^3$ with periodic boundary conditions applied across all spatial dimensions. The initial condition is given by a Maxwellian distribution characterized by the following macroscopic moments:
\begin{equation}
    \rho_0(x, y, z) = 1 + 0.5 \sin(2\pi x)\sin(2\pi y)\sin(2\pi z), \quad u_0(x, y, z) = \mathbf{0}, \quad T_0(x, y, z) = 1.
\end{equation}

The simulation is performed over the time interval $t \in (0, 0.1]$, and for the numerical implementation, the microscopic velocity space is truncated to the computational domain $v \in [-6, 6]^3$. At each training iteration, we independently sample $N_t=N_x=N_y=N_z=N_{v_x}=N_{v_y}=N_{v_z}=12$ collocation points along each respective coordinate axis. Consistent with the previous experiments, we adopt the default hyperparameters $(\alpha, \beta) = (0.1, 4.0)$ without any additional tuning. The computational results for the macroscopic moments are evaluated across different Knudsen numbers and compared against the baseline models.

\begin{table}[htbp]
\centering
\caption{Relative errors of the distribution function $f$ and macroscopic moments ($\rho, u_x, T$) for the 3D smooth problem across different Knudsen numbers ($\text{Kn} \in \{1.0, 0.1, 0.01\}$).}
\label{tab:smooth_3d_errors}
\resizebox{\textwidth}{!}{
\begin{tabular}{cllccccc}
\toprule
$\text{Kn}$ & \textbf{Loss function} & \textbf{Metric} & $\boldsymbol{\rho}$ & $\boldsymbol{u_x}$ & $\boldsymbol{u_y}$ & $\boldsymbol{u_z}$ & $\boldsymbol{T}$ \\
\midrule
\multirow{6}{*}{$1.0$} 
& \multirow{2}{*}{Standard $L^2$ Loss} 
& Rel $L^1$ & $6.07 \times 10^{-5}$ & $8.18 \times 10^{-5}$ & $7.86 \times 10^{-5}$ & $1.40 \times 10^{-4}$ & $3.13 \times 10^{-4}$ \\
& & Rel $L^2$ & $8.25 \times 10^{-5}$ & $1.23 \times 10^{-4}$ & $1.19 \times 10^{-4}$ & $1.99 \times 10^{-4}$ & $4.55 \times 10^{-4}$ \\
\cmidrule{2-8}
& \multirow{2}{*}{Relative Loss \cite{oh2025separable}} 
& Rel $L^1$ & $1.90 \times 10^{-5}$ & $\mathbf{1.38 \times 10^{-5}}$ & $1.63 \times 10^{-5}$ & $\mathbf{1.37 \times 10^{-5}}$ & $2.75 \times 10^{-5}$ \\
& & Rel $L^2$ & $2.59 \times 10^{-5}$ & $\mathbf{1.93 \times 10^{-5}}$ & $2.38 \times 10^{-5}$ & $\mathbf{1.98 \times 10^{-5}}$ & $4.50 \times 10^{-5}$ \\
\cmidrule{2-8}
& \multirow{2}{*}{\textbf{Proposed Loss}} 
& Rel $L^1$ & $\mathbf{1.78 \times 10^{-5}}$ & $1.72 \times 10^{-5}$ & $\mathbf{1.22 \times 10^{-5}}$ & $1.67 \times 10^{-5}$ & $\mathbf{2.28 \times 10^{-5}}$ \\
& & Rel $L^2$ & $\mathbf{2.39 \times 10^{-5}}$ & $2.38 \times 10^{-5}$ & $\mathbf{1.71 \times 10^{-5}}$ & $2.30 \times 10^{-5}$ & $\mathbf{3.21 \times 10^{-5}}$ \\
\midrule
\multirow{6}{*}{$0.01$} 
& \multirow{2}{*}{Standard $L^2$} 
& Rel $L^1$ & $7.28 \times 10^{-5}$ & $9.24 \times 10^{-5}$ & $8.65 \times 10^{-5}$ & $9.31 \times 10^{-5}$ & $2.01 \times 10^{-4}$ \\
& & Rel $L^2$ & $9.91 \times 10^{-5}$ & $2.24 \times 10^{-4}$ & $2.10 \times 10^{-4}$ & $1.90 \times 10^{-4}$ & $3.08 \times 10^{-4}$ \\
\cmidrule{2-8}
& \multirow{2}{*}{Relative Loss\cite{oh2025separable}} 
& Rel $L^1$ & $3.72 \times 10^{-5}$ & $\mathbf{2.07 \times 10^{-5}}$ & $\mathbf{1.97 \times 10^{-5}}$ & $2.09 \times 10^{-5}$ & $3.22 \times 10^{-5}$ \\
& & Rel $L^2$ & $4.94 \times 10^{-5}$ & $\mathbf{3.28 \times 10^{-5}}$ & $\mathbf{3.11 \times 10^{-5}}$ & $3.44 \times 10^{-5}$ & $4.65 \times 10^{-5}$ \\
\cmidrule{2-8}
& \multirow{2}{*}{\textbf{Proposed Loss}} 
& Rel $L^1$ & $\mathbf{2.56 \times 10^{-5}}$ & $2.27 \times 10^{-5}$ & $2.06 \times 10^{-5}$ & $\mathbf{1.96 \times 10^{-5}}$ & $\mathbf{2.21 \times 10^{-5}}$ \\
& & Rel $L^2$ & $\mathbf{3.33 \times 10^{-5}}$ & $3.96 \times 10^{-5}$ & $3.17 \times 10^{-5}$ & $\mathbf{2.86 \times 10^{-5}}$ & $\mathbf{3.35 \times 10^{-5}}$ \\
\bottomrule
\end{tabular}
}
\end{table}

Table \ref{tab:smooth_3d_errors} summarizes the prediction errors for the macroscopic moments across the 3D spatial domain. The results confirm that the proposed method scales robustly to the full-dimensional configuration. In both the near-continuum ($\text{Kn}=0.01$) and highly rarefied ($\text{Kn}=1.0$) regimes, the theory-guided loss substantially reduces the relative errors compared to the standard $L^2$ loss. While the empirical relative loss demonstrates comparable accuracy in certain velocity components ($u_x, u_y, u_z$), the proposed method consistently achieves the lowest errors in the fundamental macroscopic quantities, namely the density $\rho$ and temperature $T$. Overall, this indicates that theoretically imposing coercivity in the high-velocity tails remains a highly competitive and stable regularization strategy.

\subsection{Discussion on the results}
\label{subsec:discussion_consistency}

The numerical results show that our proposed weight function, $w(v) = 1 + \alpha |v|^\beta$, consistently provided stable and accurate predictions across all tested cases. However, the relative loss method sometimes showed similar or even better results, which can be explained by its structural similarity between two weight functions.

Figure \ref{fig:weight_shape_comparison} compares the proposed polynomial weight ($\alpha=0.1, \beta=4.0$) with the weight of the relative loss method ($1/(|f_\theta|+\epsilon)$) for the 1D smooth problem. While the exact profiles of the two weights are not identical, they exhibit a shared qualitative mechanism: both assign a relatively small penalty near the center velocity region ($v \approx 0$) and impose increasingly larger weights for velocities in the tail regions. This common property provides a plausible explanation for why the relative loss can achieve good performance in certain cases. 

\begin{figure}[t!]
    \centering
    \includegraphics[width=\textwidth]{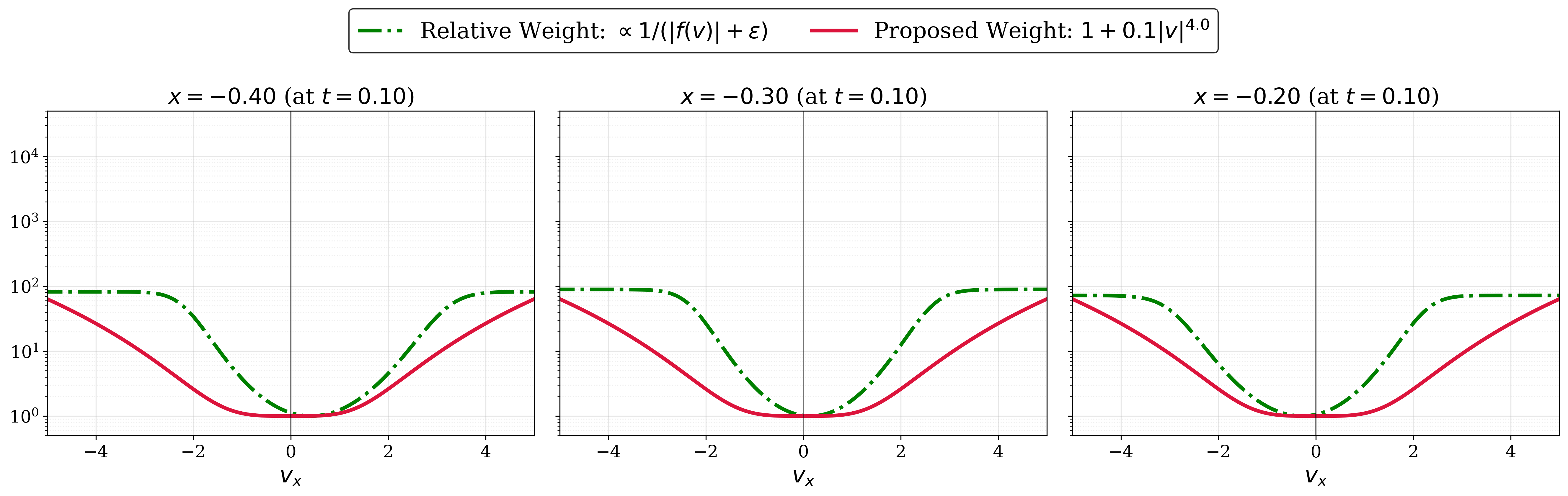}
    \caption{Comparison of weight shapes between the relative loss and the proposed loss for the 1D smooth problem at $\text{Kn}=0.01$.}
    \label{fig:weight_shape_comparison}
\end{figure}

Figure \ref{fig:weight_shape_comparison_riemann} illustrates the weight profiles evaluated for the 1D Riemann problem. Because the relative weight ($\propto 1/(|f_\theta|+\epsilon)$) depends directly on the distribution function itself, it exhibits a more irregular and fluctuating shape when dealing with problems featuring complex characteristics such as discontinuities. Although it generally assigns larger weights away from the center, the profile exhibits steep slopes and bends, differing from the proposed polynomial weight. Indeed, when using this relative loss for the 1D Riemann problem, its accuracy dropped in certain cases, sometimes performing even worse than the standard $L^2$ loss (see Relative $L^2$ and $L^1$ for $f$ in Table \ref{tab:riemann_errors}). Similarly, this issue is also observed in the 2D Riemann problem, where the relative loss yields higher errors for the density $\rho$ compared to the unweighted standard $L^2$ loss across all evaluated Knudsen numbers (see Table \ref{tab:riemann_2d_errors}).

\begin{figure}[t!]
    \centering
    \includegraphics[width=\textwidth]{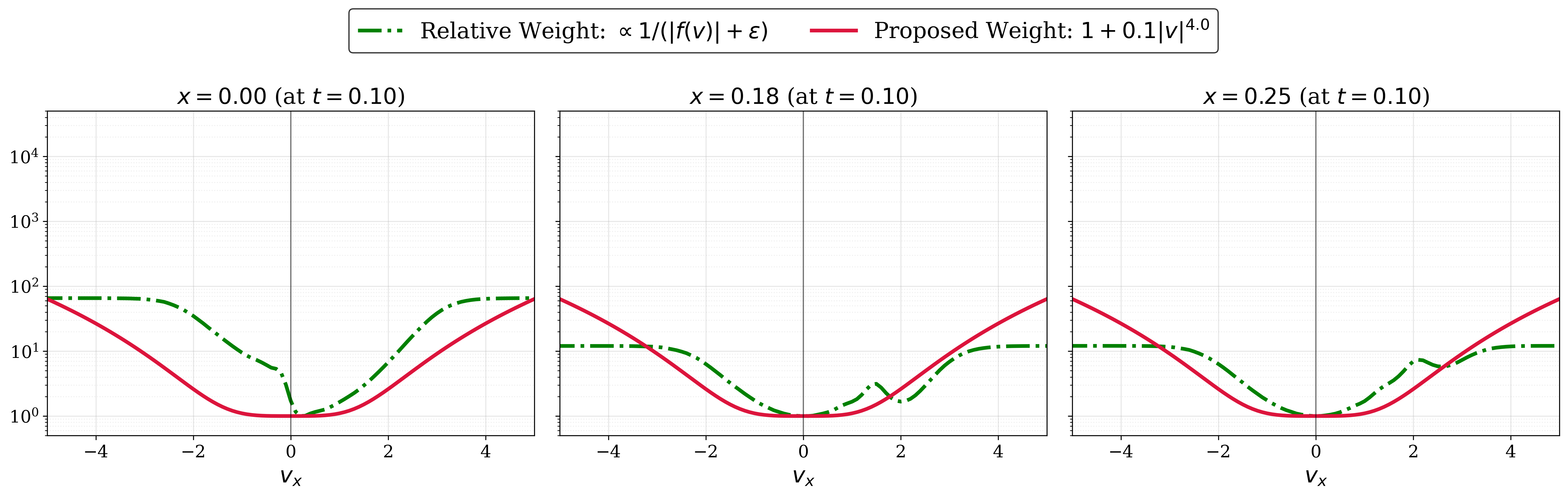}
    \caption{Comparison of weight shapes between the relative loss and the proposed loss for the 1D Riemann problem at $\text{Kn}=1.0$.}
    \label{fig:weight_shape_comparison_riemann}
\end{figure}

While the relative weight exhibits such irregular profiles depending on the flow dynamics, our proposed method relies on a fixed weight function motivated by the stability analysis. As demonstrated across the numerical experiments, this approach provides reliable and consistent performance in all tested scenarios.

\section{Conclusion}
\label{sec:conclusion}

In this paper, we identified the fundamental limitations of the standard $L^2$-based PINN loss function for solving the BGK model and proposed a theory-guided weighted PINN loss to overcome these shortcomings. By constructing explicit counterexamples, we rigorously demonstrated that simply minimizing the standard $L^2$ PDE residual is insufficient to control the error in the high-velocity region, potentially leading to severe distortions in the macroscopic moments. To resolve this, we introduced a velocity-dependent weight function satisfying appropriate integrability conditions. Through a mathematical stability analysis, we proved that minimizing this weighted loss strictly guarantees the $L^2$ convergence of the approximate solution and the $L^1$ convergence of the macroscopic quantities.

The practical efficacy of the proposed methodology was validated through numerical experiments covering various flow regimes with different particle densities ($\text{Kn} = 1.0, 0.1, 0.01$) and spatial dimensions. Regarding the hyperparameter selection for the proposed weight function, an extensive ablation study not only identified an appropriate parameter combination ($\alpha=0.1, \beta=4.0$) but also demonstrated the strong robustness of the method. Crucially, the proposed formulation consistently outperformed the standard baselines across a broad range of parameter choices, indicating that the accuracy improvement is fundamentally driven by the theoretically grounded design rather than fragile heuristic tuning. From capturing complex wave structures in the 1D Riemann problem to handling the high-dimensional phase space of the 3D smooth problem, the proposed weighted loss function consistently demonstrated reliable predictive performance, outperforming both the standard $L^2$ loss and the existing empirical relative loss.

Naturally, the theoretical convergence guarantees established in this study rely on the underlying assumption of successful optimization, which remains a well-known, universal challenge due to the non-convex nature of neural network training. Nevertheless, the theory-guided regularization strategy presented herein has proven to be a highly stable and effective approach for solving the BGK model. For future work, we plan to extend the proposed weighting scheme to more complex collision models, such as the full Boltzmann equation and the kinetic Fokker-Planck equation, to further verify its broad applicability.

\section*{Acknowledgment}
G.Ko is supported by the National Natural Science Foundation of China (No. 12288201). S.-J. Son is supported by the National Research Foundation of Korea(NRF) grant funded by the Korea government(MSIT and MOE)(No.RS-2023-00219980 and No.RS-2025-25419038). 
S. Y. Cho was supported by the National Research Foundation of Korea(NRF) grant funded by the Korea government(MSIT) (RS-2026-25493252). M.-S. Lee was supported by Basic Science Research Programs through the National Research Foundation of Korea (NRF) funded by the Ministry of Education (RS-2023-00244475 and RS-2024-00462755). The authors are grateful to Jaemin Oh for providing the Separable PINN implementation from \cite{oh2025separable}, which greatly facilitated the numerical simulations in this work.

\section*{Author Contributions}
\textbf{Gyounghun Ko:} Formal analysis, Validation, Writing -- original draft.
\textbf{Sung-Jun Son:} Formal analysis, Writing -- original draft. 
\textbf{Seung Yeon Cho:} Data curation, Validation, Writing -- original draft, Writing -- review \& editing, Supervision.
\textbf{Myeong-Su Lee:} Conceptualization, Formal analysis, Software, Validation, Visualization, Writing -- original draft, Writing -- review \& editing, Supervision.

\bibliographystyle{abbrv}
\bibliography{references}

@incollection{cercignani1988boltzmann,
  title={The boltzmann equation},
  author={Cercignani, Carlo},
  booktitle={The Boltzmann equation and its applications},
  pages={40--103},
  year={1988},
  publisher={Springer}
}

@book{sone2007molecular,
  title={Molecular gas dynamics: theory, techniques, and applications},
  author={Sone, Yoshio},
  year={2007},
  publisher={Springer}
}

@book{bird1994molecular,
  title={Molecular gas dynamics and the direct simulation of gas flows},
  author={Bird, Graeme A},
  year={1994},
  publisher={Oxford university press}
}

@article{boscarino2020high,
  title={High order conservative Semi-Lagrangian scheme for the BGK model of the Boltzmann equation},
  author={Boscarino, Sebastiano and Cho, Seung-Yeon and Russo, Giovanni and Yun, Seok-Bae},
  journal={Communications in Computational Physics},
  pages={1–56},
  year={2020}
}

@incollection{cho2024conservative,
  title={Conservative Semi-Lagrangian Methods for Kinetic Equations},
  author={Cho, Seung-Yeon and Groppi, Maria and Qiu, Jing-Mei and Russo, Giovanni and Yun, Seok-Bae},
  booktitle={Active Particles, Volume 4: Theory, Models, Applications},
  pages={283--420},
  year={2024},
  publisher={Springer}
}

@article{pieraccini2007implicit,
  title={Implicit--explicit schemes for BGK kinetic equations},
  author={Pieraccini, Sandra and Puppo, Gabriella},
  journal={Journal of Scientific Computing},
  volume={32},
  number={1},
  pages={1--28},
  year={2007},
  publisher={Springer}
}

@article{mieussens2000discrete,
  title={Discrete velocity model and implicit scheme for the BGK equation of rarefied gas dynamics},
  author={Mieussens, Luc},
  journal={Mathematical Models and Methods in Applied Sciences},
  volume={10},
  number={08},
  pages={1121--1149},
  year={2000},
  publisher={World Scientific}
}

@article{dimarco2014numerical,
  title={Numerical methods for kinetic equations},
  author={Dimarco, Giacomo and Pareschi, Lorenzo},
  journal={Acta Numerica},
  volume={23},
  pages={369--520},
  year={2014},
  publisher={Cambridge University Press}
}

@article{sands2025adaptive,
  title={An adaptive-rank approach with greedy sampling for multi-scale BGK equations},
  author={Sands, William A and Qiu, Jing-Mei and Hayes, Daniel and Zheng, Nanyi},
  journal={Journal of Computational Physics},
  pages={114523},
  year={2025},
  publisher={Elsevier}
}

@article{oh2025separable,
  title={Separable physics-informed neural networks for solving the BGK model of the Boltzmann equation},
  author={Oh, Jaemin and Cho, Seung Yeon and Yun, Seok-Bae and Park, Eunbyung and Hong, Youngjoon},
  journal={SIAM Journal on Scientific Computing},
  volume={47},
  number={2},
  pages={C451--C474},
  year={2025},
  publisher={SIAM}
}

@article{jin2024asymptotic,
  title={Asymptotic-preserving neural networks for multiscale Vlasov--Poisson--Fokker--Planck system in the high-field regime},
  author={Jin, Shi and Ma, Zheng and Zhang, Tian-ai},
  journal={Journal of Scientific Computing},
  volume={99},
  number={3},
  pages={61},
  year={2024},
  publisher={Springer}
}

@article{zhang2023physics,
  title={Physics-informed neural networks for solving forward and inverse Vlasov--Poisson equation via fully kinetic simulation},
  author={Zhang, Baiyi and Cai, Guobiao and Weng, Huiyan and Wang, Weizong and Liu, Lihui and He, Bijiao},
  journal={Machine Learning: Science and Technology},
  volume={4},
  number={4},
  pages={045015},
  year={2023},
  publisher={IOP Publishing}
}

@article{li2021physics,
  title={Physics-informed neural networks for solving multiscale mode-resolved phonon Boltzmann transport equation},
  author={Li, Ruiyang and Lee, Eungkyu and Luo, Tengfei},
  journal={Materials Today Physics},
  volume={19},
  pages={100429},
  year={2021},
  publisher={Elsevier}
}

@article{lee2023oppinn,
  title={opPINN: Physics-informed neural network with operator learning to approximate solutions to the Fokker-Planck-Landau equation},
  author={Lee, Jae Yong and Jang, Juhi and Hwang, Hyung Ju},
  journal={Journal of Computational Physics},
  volume={480},
  pages={112031},
  year={2023},
  publisher={Elsevier}
}

@article{hwang2020trend,
  title={Trend to equilibrium for the kinetic Fokker-Planck equation via the neural network approach},
  author={Hwang, Hyung Ju and Jang, Jin Woo and Jo, Hyeontae and Lee, Jae Yong},
  journal={Journal of Computational Physics},
  volume={419},
  pages={109665},
  year={2020},
  publisher={Elsevier}
}

@article{jo2026task,
  title={Task-aware evolution in physics-informed neural networks: Application to Saint-Venant torsion problems},
  author={Jo, Suyeong and Park, Sanghyeon and Shin, Jeesuk and Park, Jongcheon and Kim, Hosung and Ko, Seungchan and Lee, Sangseung and Jeon, Joongoo},
  journal={Engineering Applications of Artificial Intelligence},
  volume={168},
  pages={113988},
  year={2026},
  publisher={Elsevier}
}

@article{chen2023symbolic,
  title={Symbolic discovery of optimization algorithms},
  author={Chen, Xiangning and Liang, Chen and Huang, Da and Real, Esteban and Wang, Kaiyuan and Pham, Hieu and Dong, Xuanyi and Luong, Thang and Hsieh, Cho-Jui and Lu, Yifeng and others},
  journal={Advances in neural information processing systems},
  volume={36},
  pages={49205--49233},
  year={2023}
}

@article{raissi2019physics,
  title={Physics-informed neural networks: A deep learning framework for solving forward and inverse problems involving nonlinear partial differential equations},
  author={Raissi, Maziar and Perdikaris, Paris and Karniadakis, George E},
  journal={Journal of Computational physics},
  volume={378},
  pages={686--707},
  year={2019},
  publisher={Elsevier}
}

@article{karniadakis2021physics,
  title={Physics-informed machine learning},
  author={Karniadakis, George Em and Kevrekidis, Ioannis G and Lu, Lu and Perdikaris, Paris and Wang, Sifan and Yang, Liu},
  journal={Nature Reviews Physics},
  volume={3},
  number={6},
  pages={422--440},
  year={2021},
  publisher={Nature Publishing Group UK London}
}

@article{cho2023separable,
  title={Separable physics-informed neural networks},
  author={Cho, Junwoo and Nam, Seungtae and Yang, Hyunmo and Yun, Seok-Bae and Hong, Youngjoon and Park, Eunbyung},
  journal={Advances in Neural Information Processing Systems},
  volume={36},
  pages={23761--23788},
  year={2023}
}

@article{wang20222,
  title={Is $L^2$ Physics Informed Loss Always Suitable for Training Physics Informed Neural Network?},
  author={Wang, Chuwei and Li, Shanda and He, Di and Wang, Liwei},
  journal={Advances in Neural Information Processing Systems},
  volume={35},
  pages={8278--8290},
  year={2022}
}

@article{de2024numerical,
  title={Numerical analysis of physics-informed neural networks and related models in physics-informed machine learning},
  author={De Ryck, Tim and Mishra, Siddhartha},
  journal={Acta Numerica},
  volume={33},
  pages={633--713},
  year={2024},
  publisher={Cambridge University Press}
}

@book{evans2022partial,
  title={Partial differential equations},
  author={Evans, Lawrence C},
  volume={19},
  year={2022},
  publisher={American mathematical society}
}

@book{gilbarg1998elliptic,
  title={Elliptic partial differential equations of second order},
  author={Gilbarg, David and Trudinger, Neil S and Gilbarg, David and Trudinger, NS},
  volume={2},
  number={1},
  year={1998},
  publisher={Springer}
}

@article{abdo2024error,
  title={Error estimates of physics-informed neural networks for approximating Boltzmann equation},
  author={Abdo, Elie and Chai, Lihui and Hu, Ruimeng and Yang, Xu},
  journal={arXiv preprint arXiv:2407.08383},
  year={2024}
}

@article{bhatnagar1954model,
  title={A model for collision processes in gases. I. Small amplitude processes in charged and neutral one-component systems},
  author={Bhatnagar, Prabhu Lal and Gross, Eugene P and Krook, Max},
  journal={Physical review},
  volume={94},
  number={3},
  pages={511},
  year={1954},
  publisher={APS}
}

@article{yun2010cauchy,
  title={Cauchy problem for the Boltzmann-BGK model near a global Maxwellian},
  author={Yun, Seok-Bae},
  journal={Journal of mathematical physics},
  volume={51},
  number={12},
  year={2010},
  publisher={AIP Publishing}
}

@article{kim2021stationary,
  title={Stationary BGK models for chemically reacting gas in a slab},
  author={Kim, Doheon and Lee, Myeong-Su and Yun, Seok-Bae},
  journal={Journal of Statistical Physics},
  volume={184},
  number={2},
  pages={24},
  year={2021},
  publisher={Springer}
}

@article{cho2025kinetic,
title = {From kinetic mixtures to compressible two-phase flow: A BGK-type model and rigorous derivation},
journal = {Kinetic and Related Models},
volume = {22},
number = {0},
pages = {147-196},
year = {2026},
issn = {1937-5093},
doi = {10.3934/krm.2026010},
url = {https://www.aimsciences.org/article/id/69ba7fc9d1a02b472f364074},
author = {Seung Yeon Cho and Young-Pil Choi and Byung-Hoon Hwang and Sihyun Song}
}

@article{saint2002modele,
  title={Du mod{\`e}le BGK de l'{\'e}quation de Boltzmann aux {\'e}quations d'Euler des fluides incompressibles},
  author={Saint-Raymond, Laure},
  journal={Bulletin des sciences mathematiques},
  volume={126},
  number={6},
  pages={493--506},
  year={2002},
  publisher={Elsevier}
}

@article{lee2025forward,
  title={Forward and inverse simulation of pseudo-two-dimensional model of lithium-ion batteries using neural networks},
  author={Lee, Myeong-Su and Oh, Jaemin and Lee, Dong-Chan and Lee, Kangwook and Park, Sooncheol and Hong, Youngjoon},
  journal={Computer Methods in Applied Mechanics and Engineering},
  volume={438},
  pages={117856},
  year={2025},
  publisher={Elsevier}
}

@article{gie2024semi,
  title={Semi-analytic PINN methods for singularly perturbed boundary value problems},
  author={Gie, Gung-Min and Hong, Youngjoon and Jung, Chang-Yeol},
  journal={Applicable Analysis},
  volume={103},
  number={14},
  pages={2554--2571},
  year={2024},
  publisher={Taylor \& Francis}
}

@article{kim2024physics,
  title={Physics-informed convolutional transformer for predicting volatility surface},
  author={Kim, Soohan and Yun, Seok-Bae and Bae, Hyeong-Ohk and Lee, Muhyun and Hong, Youngjoon},
  journal={Quantitative Finance},
  volume={24},
  number={2},
  pages={203--220},
  year={2024},
  publisher={Taylor \& Francis}
}

@article{kim2026physics,
  title={A Physics-Informed, Global-in-Time Neural Particle Method for the Spatially Homogeneous Landau Equation},
  author={Kim, Minseok and Son, Sung-Jun and Kim, Yeoneung and Lee, Donghyun},
  journal={arXiv preprint arXiv:2603.10874},
  year={2026}
}

@article{cai2021physics,
  title={Physics-informed neural networks (PINNs) for fluid mechanics: A review},
  author={Cai, Shengze and Mao, Zhiping and Wang, Zhicheng and Yin, Minglang and Karniadakis, George Em},
  journal={Acta Mechanica Sinica},
  volume={37},
  number={12},
  pages={1727--1738},
  year={2021},
  publisher={Springer}
}

@article{kingma2014adam,
  title={Adam: A method for stochastic optimization},
  author={Kingma, Diederik P and Ba, Jimmy},
  journal={arXiv preprint arXiv:1412.6980},
  year={2014}
}

@article{yun2015ellipsoidal,
  title={Ellipsoidal BGK model near a global Maxwellian},
  author={Yun, Seok-Bae},
  journal={SIAM Journal on Mathematical Analysis},
  volume={47},
  number={3},
  pages={2324--2354},
  year={2015},
  publisher={SIAM}
}

@article{cho2021conservative2,
  title={Conservative semi-Lagrangian schemes for kinetic equations Part II: Applications},
  author={Cho, Seung Yeon and Boscarino, Sebastiano and Russo, Giovanni and Yun, Seok-Bae},
  journal={Journal of Computational Physics},
  volume={436},
  pages={110281},
  year={2021},
  publisher={Elsevier}
}

@article {Son2025,
	AUTHOR = {Son, Sung-Jun},
	TITLE = {{$L^P$}-solutions to the {ES}-{BGK} model of the polyatomic
	molecules},
	JOURNAL = {J. Math. Phys.},
	FJOURNAL = {Journal of Mathematical Physics},
	VOLUME = {65},
	YEAR = {2024},
	NUMBER = {10},
	PAGES = {Paper No. 101501, 23},
	ISSN = {0022-2488,1089-7658},
	MRCLASS = {35Q20 (82B40)},
	MRNUMBER = {4809228},
	DOI = {10.1063/5.0209946},
	URL = {https://doi.org/10.1063/5.0209946},
}

@article {SY2022,
	AUTHOR = {Son, Sung-jun and Yun, Seok-Bae},
	TITLE = {Cauchy problem for the {ES}-{BGK} model with the correct
	{P}randtl number},
	JOURNAL = {Partial Differ. Equ. Appl.},
	FJOURNAL = {Partial Differential Equations and Applications},
	VOLUME = {3},
	YEAR = {2022},
	NUMBER = {3},
	PAGES = {Paper No. 41, 9},
	ISSN = {2662-2963,2662-2971},
	MRCLASS = {82C40 (35Q20 76P05)},
	MRNUMBER = {4423692},
	MRREVIEWER = {Silvia\ Lorenzani},
	DOI = {10.1007/s42985-022-00175-2},
	URL = {https://doi.org/10.1007/s42985-022-00175-2},
}

@article {SY2023,
	AUTHOR = {Son, Sung-jun and Yun, Seok-Bae},
	TITLE = {The {ES}-{BGK} for the polyatomic molecules with infinite
	energy},
	JOURNAL = {J. Stat. Phys.},
	FJOURNAL = {Journal of Statistical Physics},
	VOLUME = {190},
	YEAR = {2023},
	NUMBER = {8},
	PAGES = {Paper No. 129, 27},
	ISSN = {0022-4715,1572-9613},
	MRCLASS = {82C40 (35Q35 76P05)},
	MRNUMBER = {4622604},
	MRREVIEWER = {Xuanji\ Jia},
	DOI = {10.1007/s10955-023-03139-x},
	URL = {https://doi.org/10.1007/s10955-023-03139-x},
}

@article {Yun2019,
	AUTHOR = {Yun, Seok-Bae},
	TITLE = {Ellipsoidal {BGK} model for polyatomic molecules near
	{M}axwellians: a dichotomy in the dissipation estimate},
	JOURNAL = {J. Differential Equations},
	FJOURNAL = {Journal of Differential Equations},
	VOLUME = {266},
	YEAR = {2019},
	NUMBER = {9},
	PAGES = {5566--5614},
	ISSN = {0022-0396,1090-2732},
	MRCLASS = {35Q20 (35B40 35Q82 82D05)},
	MRNUMBER = {3912760},
	MRREVIEWER = {Bertrand\ Lods},
	DOI = {10.1016/j.jde.2018.10.036},
	URL = {https://doi.org/10.1016/j.jde.2018.10.036},
}

@article {SY2025,
    AUTHOR = {Son, Sung-Jun and Yun, Seok-Bae},
     TITLE = {Local in time solution to {ES}-{BGK} model with correct
              {P}randtl number},
   JOURNAL = {Kinet. Relat. Models},
  FJOURNAL = {Kinetic and Related Models},
    VOLUME = {18},
      YEAR = {2025},
    NUMBER = {4},
     PAGES = {499--519},
      ISSN = {1937-5093,1937-5077},
   MRCLASS = {35Q20 (76P05)},
  MRNUMBER = {4893215},
       DOI = {10.3934/krm.2024024},
       URL = {https://doi.org/10.3934/krm.2024024},
}

\end{document}